\newcommand{\Var}{\mathrm{Var}}
\newcommand{\E}{\mathbb{E}}
\newcommand{\KL}{D}
\newcommand{\defeq}{\vcentcolon =}
\newcommand{\R}{\mathbb{R}}
\newcommand{\N}{\mathbb{N}}
\newcommand{\ie}{\emph{i.e.}}
\renewcommand{\eqref}[1]{Eq.~(\ref{#1})}
\newcommand{\appref}[1]{App.~\ref{#1}}
\DeclareMathOperator*{\argmin}{arg\,min}
\newcommand{\statespace}{\mathcal{S}}
\newcommand{\actionspace}{\mathcal{A}}
\newcommand{\tokenspace}{\mathcal{T}}
\newcommand{\KLQlam}{KLQ} %
\theoremstyle{plain}
\newtheorem{theorem}{Theorem}[section]
\newtheorem{proposition}[theorem]{Proposition}
\newtheorem{lemma}[theorem]{Lemma}
\newtheorem{corollary}[theorem]{Corollary}
\theoremstyle{definition}
\theoremstyle{remark}
\newtheorem{remark}[theorem]{Remark}
\icmltitlerunning{A Token-level Action-Value perspective on Online RLHF}
\begin{document}

\twocolumn[
\icmltitle{KL-Regularised Q-Learning: A Token-level Action-Value perspective on Online RLHF}

\icmlsetsymbol{equal}{*}

\begin{icmlauthorlist}
\icmlauthor{Jason R. Brown}{equal,CL}
\icmlauthor{Lennie Wells}{equal,StatsLab}
\icmlauthor{Edward James Young}{equal,CBL}
\icmlauthor{Sergio Bacallado}{StatsLab}
\end{icmlauthorlist}

\icmlaffiliation{CBL}{Computational and Biological Learning Group, Department of Engineering, University of Cambridge, Cambridge, UK}
\icmlaffiliation{CL}{Department of Computer Science and Technology, University of Cambridge, Cambridge, UK}
\icmlaffiliation{StatsLab}{Statistics Laboratory, Department of Pure Mathematics and Mathematical Statistics, University of Cambridge, UK}

\icmlcorrespondingauthor{Edward James Young}{ey245@cam.ac.uk}
\icmlcorrespondingauthor{Lennie Wells}{ww347@cam.ac.uk}
\icmlcorrespondingauthor{Jason R. Brown}{jrb239@cam.ac.uk}
\icmlkeywords{ICML, Reinforcement Learning, Language Modelling, Reward Modelling}

\vskip 0.3in
]
\printAffiliationsAndNotice{\icmlEqualContribution} 

\begin{abstract}
Proximal Policy Optimisation (PPO) is an established and effective policy gradient algorithm used for Language Model Reinforcement Learning from Human Feedback (LM-RLHF). 
PPO performs well empirically but has a heuristic motivation and handles the KL-divergence constraint used in LM-RLHF in an ad-hoc manner.
In this paper, we develop a a new action-value RL method for the LM-RLHF setting, KL-regularised Q-Learning (KLQ).
We then show that our method is equivalent to a version of PPO in a certain specific sense, despite its very different motivation.
Finally, we benchmark KLQ on two key language generation tasks---summarisation and single-turn dialogue. We demonstrate that KLQ performs on-par with PPO at optimising the LM-RLHF objective, and achieves a consistently higher win-rate against PPO on LLM-as-a-judge evaluations.

\end{abstract}

\section{Introduction}

LLMs are typically tuned through a three-stage process: 
generative pre-training, supervised fine tuning (SFT), and Language Model-Reinforcement Learning from Human Feedback (LM-RLHF) \cite{ouyang_training_2022,bai_training_2022,huang_n_2024}.
These foundational works used the Proximal Policy Optimisation (PPO) algorithm for LM-RLHF, which is still widely used and viewed as the canonical algorithm for LM-RLHF \citep{xu_is_2024}.

A number of recent proposals have built upon PPO to develop alternative algorithms for the LM-RLHF setting, notably GRPO \citep{shao_deepseekmath_2024}, which works with the completion-level rather than token-level MDP and uses group-rollouts to compute advantage estimates.
There have also been a number of action-value methods proposed for LM-RLHF, but typically these work in the completion-level RL setting \citep{snell_offline_2023}, or are offline RL methods. We give a more thorough literature review in \cref{sec:related-work}.

By contrast, we sought to develop an \emph{online} \emph{action-value} method specifically tailored to the \emph{token-level} LM-RLHF setting.
We introduce our algorithm, KL-regularised Q-Learning (\KLQlam{}), in \cref{sec:KLQ}. 
\KLQlam{} exploits the discrete state-action space, and the presence of KL-regularisation against a base-policy.
We use $\lambda$-returns to construct regressions targets and leverage a powerful analytic correspondence between optimal policies and action-values in the KL-regularised setting.

Following our presentation of \KLQlam{}, we go on to show analytically that the updates implemented by \KLQlam{} are equivalent to updates performed by (a modified version of) PPO, despite the loss function for \KLQlam{} being substantially simpler.
This result was largely inspired by the seminal work of \citet{schulman_equivalence_2018}.

Finally, we compare the performance of \KLQlam{} to PPO on two standard benchmarks: TL;DR for summarisation, and Anthropic-HH for single-turn dialogue.
We demonstrate that KLQ optimises the LM-RLHF objective similarly well to PPO, with equal per-update compute cost. Moreover, we observe that the policy learned by KLQ outperforms the policy learned by PPO in direct LLM-as-a-judge evaluations.  

We hope that these results will help the community improve its understanding of what is important for LM-RLHF, and open the door to future action-value approaches.

\section{Background}\label{sec:background}

We will suppose that the reader is familiar with standard Reinforcement Learning (RL) \citep{sutton_reinforcement_2018} and with the three stage LLM pipeline of generative pre-training, supervised fine-tuning, and RLHF \citep{ouyang_training_2022}.
Instead, we focus on the less common framework of KL-regularised RL, which is a key motivation for our method.

\subsection{KL-regularised RL}

KL-regularised RL involves sequential interactions between an agent and an environment. At time $t$, the agent finds itself in state $s_t$ from a set $\statespace$ of possible states. The agent then samples an action $a_t$ from some set $\actionspace$ of possible actions according to a stochastic policy $\pi(\cdot|s)$. The environment then administers the agent some reward $r_{t+1} \in \R$ and transitions into a new state $s_{t+1}$.
Let $\gamma \in (0, 1)$ be a discount factor and $T$ be the (possibly infinite) time-step on which a terminal state is reached and the episode ends.
Let $\pi_b$ be a known (reliable) reference policy and $\tau > 0$ be a temperature parameter controlling the strength of the regularisation to this policy. 
Then the KL-regularised RL objective is to maximise the expectation of the discounted, KL-regularised return
\begin{equation}\label{eq:kl-augmented return}
    G_t = \sum_{k = t}^{T-1} \gamma^{k-t}\left( r_{k+1} - \tau \gamma \KL(\pi_\theta||\pi_b)\left(s_{k+1}\right) \right) ,
\end{equation}
where $\KL(\pi_\theta||\pi_b)(s)$ denotes the KL divergence between the action distributions at state $s$:
\begin{equation}\label{eq:kl-divergence}
    \KL(\pi_\theta||\pi_b)(s) \defeq \sum_a \pi_{\theta}(a|s) \log\left( \frac{\pi_{\theta}(a|s)}{\pi_b(a|s)} \right).
\end{equation}

This KL-regularised RL setting can be viewed as generalizing that of entropy-regularised RL \citep{levine_reinforcement_2018, schulman_equivalence_2018}, which is more common in the online setting and for which a set of highly practical effective algorithms have been developed \citep{haarnoja_reinforcement_2017, haarnoja_soft_2019}. The theory of KL-regularised setting is a straightforward extension of the entropy-regularised theory. For completeness, we repeat core standard results in \appref{app:theoretical results}.

\begin{remark}\label{rmk:gamma-is-one}
    In the case where $\gamma = 1$, the regularisation over states in \cref{eq:kl-augmented return} is equivalent to a KL regularisation over full trajectories $(s_1, \dots, s_T)$. In the LM-RLHF setting we will only ever take $\gamma = 1$, but we will present our algorithm for general $\gamma$ to clarify how it fits in to this standard discounting framework.
\end{remark}

\subsection{LLMs and RLHF}\label{sec:llms and rlhf}
We take the original RLHF pipeline for fine-tuning
LLMs laid out in \citet{ziegler_fine-tuning_2020} as our starting point. This trains LLMs for use in a \textit{prompt-completion} setting.
RLHF methods can be viewed either on the token-level or the completion level.

To fix notation, write $\tokenspace$ for space of all tokens and $\tokenspace^*=\cup_{n \in \N} \tokenspace^n$ for the space of finite sequences of tokens.
We can then denote a language model as $\pi(a|s)$, where $a \in \tokenspace, s \in \tokenspace^*$.
We represent a prompt by a sequence of tokens $x \in \tokenspace^*$.
The model will need to return some completion $y \in \tokenspace^*$, by auto-regressive sampling up to some fixed length, or until a distinguished `end-of-sequence' (EOS) token has been sampled.
We will write $\pi(y|x)$ for the probability mass function corresponding to this auto-regressive generation.

Write $\pi_b(a|s)$ for the policy generated by the supervised fine-tuning stage. Write $R_\phi(x, y)$ for the \textit{reward model} trained on human preferences to estimate the quality of a response $y$ for a given query $x$.
We will refer to the outputs $R_\phi(x,y)$ of the reward model as `preference scores' from now on to avoid ambiguity with the notion of reward in the standard RL setting.

A reinforcement learning (RL) algorithm is then used to fine-tune the SFT model to generate completions that receive high preference scores. It is typical to regularise using the KL-divergence between the generation policy and a reference policy, typically obtained via SFT policy, and which we will denote by $\pi_b$ for consistency with the KL-regularised RL setting. The modified objective is given by: 
\begin{equation}\label{eq:rlhf objective preference and penalty}
    \bar{R}_\phi(x,y) = R_\phi(x,y) - \tau \log\left( \frac{\pi_{\theta}(y|x)}{\pi_b(y|x)}\right).
\end{equation}
The KL-divergence penalty ensures that completions remain plausible under the SFT policy, avoiding mode-collapse and degenerate behaviour from reward model overoptimisation \citep{gao_scaling_2022}.

There are two distinct ways to view this learning problem from the state-action-reward framework of RL. Either:
\begin{enumerate}
    \item \textbf{Completion-level view:} Treat queries $x$ as states and completions $y$ as actions, leading to a \emph{contextual bandit} setting. In this case, the KL-divergence is over the entire completion $y$. This approach is taken by the RLOO algorithm \citep{ahmadian_back_2024}. 
    \item \textbf{Token-level view:} Treat partial completions as states, $s_t = (x, y_{1:t})$, and next tokens as actions, $a_t = y_{t+1}$. In this case, a reward equal to the weighted negative token-wise KL-divergence, $-\tau \KL(\pi_\theta||\pi_b)(s_{t+1})$, is administered at every time step. Since the reward model evaluates full completions, a final reward $R_\phi$ for the completion is given at the EOS token. This leads to a \emph{Markov Decision Process (MDP)} viewpoint. We take this approach, in common with PPO. 
\end{enumerate}
For $\gamma = 1$, these views are equivalent, since 
\begin{equation*}
\begin{split}
    \mathrm{KL}&(\pi(y|x)||\pi_b(y|x)) \\
    &= \E\left[ \sum_{t=0}^{T-1} \mathrm{KL}(\pi(a_t|s_t)||\pi_b(a_t|s_t)) \right] \, .
\end{split}
\end{equation*}
With either view, the full completion $y$ counts as an episode, and one can view the return of a trajectory $y$ as resulting in a total \emph{return} $\bar{R}_\phi(x,y)$. 
Moreover, following \cref{rmk:gamma-is-one}, this is equivalent to working in the KL-regularised framework with $\pi_b$ as the reference policy. In the remainder of our exposition we will write $\pi_b$ instead of $\pi_b$, since we are only interested in the LM-RLHF setting.

\paragraph{Proximal Policy Optimisation (PPO).}
The PPO algorithm trains a stochastic policy $\pi_{\theta}(a|s)$ and a value function $V_{\theta}(s)$, often implemented as two heads off a common neural network body. PPO is an on-policy algorithm which alternates between successive phases where experience is generated using the current policy $\pi_{\theta}(a|s)$ and then the policy and value function are updated.
The loss for the value function is based on a squared error between the value function output and value estimates formed from rollout experience.
In the typical version of PPO, PPO-clip, the policy is trained using a `clipped' objective, which increases the probability of high advantage actions, but removes the incentive for it to deviate too much from the previous policy iterate, $\pi_\text{old}$.
Letting $$\rho_\theta(a,s) = \frac{\pi_\theta(a|s)}{\pi_\text{old}(a|s)},$$ the objective takes the form
\begin{align}\label{eq:ppo-clip objective}
    \min \left(
        \rho_\theta(a,s)
        \hat A(s,a),
        \text{clip}^{1+\epsilon}_{1-\epsilon} \left[
            \rho_\theta(a,s)
            \right] \hat A(s,a)   
    \right),
\end{align}
where $\hat A(s,a)$ is the estimated advantage for taking action $a$ in state $s$ based on the current value function, and $\epsilon$ is the clipping ratio.
For concreteness, we give complete pseudocode for PPO-clip in the LM-RLHF setting in \cref{app:psuedocode}; we refer the reader to \citet{huang_n_2024} for a careful discussion of the algorithms implementation details.
Another variant of PPO that will be of interest is PPO-penalty.
This has a slightly different policy objective,
\begin{equation}\label{eq:ppo-penalty objective}\small
    \E_{\pi_{\rm old}}\left[
    \rho_\theta(a,s)
    \hat{A}(s,a) - \beta \KL(\pi_{\rm old} || \pi_\theta)(s) - \tau \KL(\pi_\theta || \pi_b)(s) \right],
\end{equation}
that incorporates a KL-penalty to the previous iterate instead of clipping.

\section{The \KLQlam{}  Algorithm}\label{sec:KLQ}

Our proposed algorithm, \KLQlam{}, is like PPO in that it is \emph{on-policy} and consists of the following two phases:
\begin{enumerate}
    \item Gather experiences from rollouts according to the current policy.
    \item Updating the parameters of language model neural networks using minibatch gradients to optimise a certain objective.
\end{enumerate}

However, whereas PPO uses a policy-gradient objective, KLQ uses the following $\ell^2$ loss between the \emph{action-value function} $Q_{\theta}(s, a)$ and value-estimates $\hat{G}$:
\begin{equation}\label{eq:ell-2 loss}
    \mathcal{L}(\theta) = \E\left[ \left( Q_{\theta}(s, a) - \hat{G} \right)^2 \right],
\end{equation}
where the expectation is over state, action, value-estimate triples.

There are two core components of our algorithm to specify: the construction of $\hat{G}$, and the parametrisation of $Q_\theta$.
We construct the value estimates $\hat{G}$ using $\lambda$-returns, as explained in section \cref{sec:lambda-returns}.
We construct $Q_\theta$ using a natural mapping between the policies, state-values and action-values in the KL-regularised setting, as explained in \cref{sec:action-value decomposition}. The full pseudocode for the \KLQlam{} algorithm can be found in \cref{app:psuedocode}.

\subsection{$\lambda$-return value estimator}\label{sec:lambda-returns}

A major consideration for a value-based methodology in the language modeling setting is the choice of value estimator. Many prominent action-value methods 
\citep{haarnoja_reinforcement_2017, mnih_human-level_2015} use one-step returns for their value estimate. However, one-step methods are inappropriate for the LM-RLHF setting, since the non-KL-regularisation portion of the reward signal (generated by the reward model, $R_\phi(x,y)$)  is very sparse, being administered only at the final time-step of the trajectory. We therefore choose a value formulation based on the $\lambda$-return framework. This allows for the effective propagation of reward information over longer time horizons.

The $\lambda$-return \citep{sutton_learning_1988} is a weighted combination of $n$-step returns. The parameter $\lambda$ acts as a \emph{truncation rate}. When $\lambda = 1$, we recover the zero-bias, high-variance full return. When $\lambda = 0$ we recover the high-bias, low-variance one-step-return. Intermediate values of the truncation rate $\lambda$ interpolate between these, allowing for an optimal bias-variance trade-off. 
PPO typically makes use of Generalised Advantage Estimation (GAE) \citep{schulman_high-dimensional_2018} to form advantage estimates, which uses $\lambda$-returns formulated in terms of the state-value function in order to estimate action-values. By contrast, \KLQlam{} applies the $\lambda$-return framework to action-value functions, and to prepare for our theoretical results in \cref{sec:equivalence}, we present these in the \emph{conservative} case. The conservative $\lambda$-returns $G^{\lambda,\alpha}_t$ corresponding to our action-value function $Q_\theta$ are defined via:
\begin{equation}\label{eq:lambda-return}
    G^{\lambda,\alpha}_t[Q_\theta] = \alpha \sum_{k=t}^{T-1} (\lambda \gamma)^{k-t} \delta_k + Q_{\theta}(s_t,a_t),
\end{equation}
where $\delta_t$ is the temporal-difference (TD) error for action-value functions in the KL-regularised setting,
\begin{align}\label{eq:TD-error}
    \delta_t &= r_{t+1} \nonumber \\
    &+ \gamma\left( \sum_a \pi_{\theta}(a|s_{t+1}) Q_{\theta}(s_{t+1},a) - \tau \KL(s_{t+1}) \right) \nonumber \\ 
    &- Q_{\theta}(s_t, a_t),
\end{align}
which quantifies the difference between the predicted value of actions and a one-step bootstrapped approximation of value. 
For further background on TD errors and their theory, see \citet{sutton_learning_1988, harutyunyan_q_2016}.

\begin{remark}\label{rmk:alpha-conservative-in-practice}
    The parameter $\alpha \in (0,1]$ scales the error term to bring the returns closer to the current action-value estimates. Thus, $\alpha \to 0$ is maximally conservative, while $\alpha \to 1$ recovers the standard $\lambda$-returns. We take $\alpha = 1$ in all our experiments, and only consider general $\alpha$ in our theoretical result. 
\end{remark}

Fitting the action-value function $Q_{\theta}(s, a)$ to the value-estimates $\hat{G}$ corresponds to a (partial) \emph{policy improvement} step - see Theorem \ref{thrm:lambda policy evaluation} in \cref{app:theoretical results}. In particular, it moves the action-value function $Q_{\theta}(s, a)$ towards the action-values of the policy $\pi$ used to generate the rollouts.

\subsection{Action-value decomposition}\label{sec:action-value decomposition}

One key barrier for the use of action-value function methods in the language modeling setting is the initialisation of the $Q$-function. For LM-RLHF, policy gradient methods have an obvious advantage, in that the policy can be initialised as the output of a previous stage in a training pipeline (see \cref{sec:llms and rlhf}). Without this initialisation, RL training for tasks in natural language would be completely infeasible. In order to use action-value methods for LM-RLHF, we develop a principled way to convert between a policy initialisation and an action-value initialisation.

\paragraph{General \texorpdfstring{$(\pi, V) \leftrightarrow Q$}{(policy, state-value) to action-value} mapping}\label{para:clean-mapping}
In the KL-regularised setting, policy improvement can be performed by greedily maximising the one-step expected value minus KL-divergence penalty, \ie, the new policy becomes the argmax of the following objective over $\pi$:
\begin{equation}
    \E_\pi\left[ Q(s,a) - \tau \log\left( \frac{\pi(a|s)}{\pi_b(a|s)} \right) \right].
\end{equation}
See Theorem \ref{thrm:kl-regularised policy improvement} and Corollary \ref{thrm:Boltzmann policy improvement} in \cref{app:theoretical results}.
Solving this optimisation problem yields the Boltzmann policy with respect to $Q$,
\begin{equation}\label{eq:boltzmann policy update}
    \pi[Q](a|s) = \pi_b(a|s) e^{( Q(s,a) - V[Q](s) )/\tau}.
\end{equation}
Here $V[Q](s)$ is the Boltzmann state-value function, which serves as a normalisation factor for the probability Boltzmann distribution,
\begin{equation}\label{eq:q to v mapping}
    V[Q](s) = \tau \log \left( \sum_a \pi_b(a|s)\exp( Q(s,a)/\tau )  \right).
\end{equation}
This relationship defines a clean mapping from action value functions, $Q$, to the corresponding (policy, state-value function) pairs \emph{at optimality}. %
This mapping can be inverted to recover an action-value function $Q$ from a (policy, state-value function) pair, $(\pi,V)$ via
\begin{equation}\label{eq:pi v to q mapping}
    Q[\pi,V](s,a) = \tau \log\left( \frac{\pi(a|s)}{\pi_b(a|s)} \right) + V(s) \,.
\end{equation}

\paragraph{Applying the mapping to our method} We leverage this mapping to define a convenient parametrisation of our action-value function, which enables us to move between $Q$-space and $(\pi,V)$-space seamlessly. In particular, we parametrise our action-value function using the policy, as:
\begin{equation}\label{eq:action-value decomposition}
    Q_{\theta}(s, a) = \tau \log\left( \frac{\pi_{\theta}(a|s)}{\pi_b(a|s)} \right) + V_{\theta}(s).
\end{equation}
The policy $\pi_{\theta}(a|s)$ is initialised as the SFT-policy, $\pi_b$, and the Boltzmann state-value function $V$ is a randomly initialised value head added to the policy network. 

Under this parametrization, computation of the TD-error \eqref{eq:TD-error} is particularly efficient. The TD-error now becomes:
\begin{equation}\label{eq:td-error with boltzmann state-value}
    \delta_t = r_{t+1} + \gamma V_{\theta}(s_{t+1}) 
    - Q_{\theta}(s_t, a_t).
\end{equation}
This follows from the fact that the log-probability term in the action-value function, \eqref{eq:action-value decomposition}, and the log-probabilities in the KL-divergence penalty, \eqref{eq:kl-divergence}, exactly cancel.

Note that the policy component of the action-value function $\pi_{\theta}(a|s)$ is never trained by itself on any kind of policy objective. Instead, the action-value function defined by \eqref{eq:action-value decomposition} is trained via the $\ell^2$ loss, \eqref{eq:ell-2 loss}, with $\lambda$-return value estimates, \eqref{eq:lambda-return}. Because the action-value function is parametrised in terms of the policy, \eqref{eq:action-value decomposition}, training the action-value function automatically updates the policy. Furthermore, our parametrisation ensures that the policy is always Boltzmann with respect to the action-value function, \eqref{eq:boltzmann policy update}. Accordingly, the policy improvement steps in \KLQlam{} happen \emph{implicitly}, as a result of our parameterisation \eqref{eq:action-value decomposition}.

\section{Equivalence between KLQ and PPO updates}\label{sec:equivalence}

In this section, we demonstrate that the updates performed by KLQ in $Q$-space are equivalent to the updates performed by a modified version of PPO in $(\pi,V)$-space.

We begin by considering the loss function used by the KLQ algorithm, \ref{eq:ell-2 loss}. Define a sequence $(Q_k)_k$ by iteratively minimising this loss function, starting at some $Q_0$:
\begin{equation}\label{eq:q-space update rule}
Q_{k+1} \gets \argmin_{Q} \E\left[ \left( Q(s,a) -  G^{\lambda, \alpha}[Q_k](s,a) \right)^2  \right]
\end{equation}
where we have made explicit the dependence of $G^{\lambda, \alpha}$ on the action-value function $Q_k$ and the starting state-action pair.

We now construct update rules in $(\pi,V)$-space which lead to the equivalent sequence of iterates; we state this formally in \cref{prop:equivalence}, and give a proof in \cref{app:full derivation}. 

We begin by defining advantage estimates via:
\begin{equation}
    \hat{A}[\pi,V](s,a) \defeq G^\lambda[\pi,V](s,a) - V(s) 
\end{equation}
We will take the sequence of policy iterates $\pi_{k+1}$ to be the unique maximisers of the following objective (\emph{cf.} \eqref{eq:ppo-penalty objective}):
\begin{equation}\label{eq:pi objective}
\begin{split}
    &\E_{\pi_k}\left[ \frac{\pi(a|s)}{\pi_k(a|s)} \hat{A}[\pi_k, V_k](s,a) \right] \\[5pt]
    &\qquad - \beta \KL(\pi || \pi_k ) - \tau \KL(\pi || \pi_b)
\end{split}
\end{equation}
where the KL-divergences are averaged over the state-visitation distribution of $\pi_k$. This policy objective can be seen as a modified version of PPO-penalty. Instead of the policy ratio clipping trick employed by PPO-Clip, this objective penalises the KL-divergence between the current policy and the previous iterate. However, it differs from standard PPO-penalty in that the KL-divergence is reverse. The previous iteration regularisation strength $\beta$ which pre-multiplies this term is defined in terms of $\alpha$ and $\tau$ by:
\begin{equation}\label{eq:beta definition}
    \beta = \tau \left( \frac{1 - \alpha}{\alpha} \right).
\end{equation}
We take next iterate state-value function $V_{k+1}$ to be the unique minimiser of the following loss%
:
\begin{equation}\label{eq:v loss}
    \E\left[ \left( V(s) - y(s,a)  \right)^2  \right],
\end{equation}
where
\begin{equation}\label{eq:v loss detail}
    y(s,a) =  G^{\lambda,\alpha}[V_k, \pi_k](s,a) - \tau \log\left( \frac{\pi_{k+1}(a|s)}{\pi_b(a|s)} \right).
\end{equation}
This differs from the standard state-value loss used for PPO in two ways. Firstly, it uses the $\alpha$ conservative $\lambda$-returns in place of the standard $\lambda$-return. Secondly, the KL-penalty term uses $\pi_{k+1}$ rather than $\pi_k$.

\begin{proposition}\label{prop:equivalence}
    The sequence $(Q_k)_k$ defined by \cref{eq:q-space update rule} corresponds via the mapping from \cref{para:clean-mapping} to the sequence $(\pi_k, V_k)_k$ defined above by optimising \cref{eq:pi objective,eq:v loss}.
\end{proposition}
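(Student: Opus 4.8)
The plan is to prove the correspondence by induction on $k$: assuming $Q_k$ is the image of $(\pi_k,V_k)$ under the mapping of \cref{para:clean-mapping}, I would show that the single $Q$-space step \cref{eq:q-space update rule} and the single $(\pi,V)$-space step \cref{eq:pi objective,eq:v loss} commute with the mapping. The base case $k=0$ holds by the parametrisation \cref{eq:action-value decomposition} used to initialise the method. For the inductive step, the first observation I would make is that both the $Q$-regression and the value-regression are $\ell^2$ projections, so their minimisers are the relevant conditional expectations; in particular $Q_{k+1}(s,a) = \E[G^{\lambda,\alpha}[Q_k](s,a)\mid s,a]$. Using the identity $G^{\lambda,\alpha} = \alpha G^\lambda + (1-\alpha)Q_k$, which is immediate from \cref{eq:lambda-return}, together with $\hat A[\pi_k,V_k] = \E[G^\lambda[Q_k]\mid s,a]-V_k$, this rewrites as
\begin{equation*}
    Q_{k+1}(s,a) = \alpha\big(\hat A[\pi_k,V_k](s,a) + V_k(s)\big) + (1-\alpha)Q_k(s,a).
\end{equation*}
Here I would note that under the correspondence the simplified TD error \cref{eq:td-error with boltzmann state-value} shows $G^{\lambda,\alpha}[Q_k]$ depends only on $(\pi_k,V_k)$, so it agrees with the return $G^{\lambda,\alpha}[V_k,\pi_k]$ used in \cref{eq:v loss detail}.

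Next I would match the policy component. Applying the mapping \cref{eq:boltzmann policy update} to $Q_{k+1}$ gives $\pi[Q_{k+1}](a|s)\propto \pi_b(a|s)\exp(Q_{k+1}(s,a)/\tau)$, whose dependence on $a$ is governed by $\alpha\hat A + (1-\alpha)Q_k$, since the $V_k(s)$ term is state-only and cancels in the normalisation (state-only shifts of $Q$ shift $V[Q]$ identically and leave $\pi[Q]$ fixed). On the other side, \cref{eq:pi objective} is concave in $\pi$: its first term reduces by importance sampling to $\sum_a\pi(a|s)\hat A(s,a)$ and the two KL terms are convex. A Lagrange-multiplier computation for the per-state simplex constraint yields the unique maximiser
\begin{equation*}
    \pi_{k+1}(a|s) \propto \pi_b(a|s)^{\tau/(\beta+\tau)}\,\pi_k(a|s)^{\beta/(\beta+\tau)}\,\exp\!\Big(\tfrac{\hat A(s,a)}{\beta+\tau}\Big).
\end{equation*}
The definition of $\beta$ in \cref{eq:beta definition} is precisely what makes the exponents collapse, giving $\beta+\tau=\tau/\alpha$, $\tau/(\beta+\tau)=\alpha$, and $\beta/(\beta+\tau)=1-\alpha$. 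Substituting $\pi_k = \pi_b\exp((Q_k-V_k)/\tau)$ then shows $\pi_{k+1}(a|s)\propto\pi_b(a|s)\exp((\alpha\hat A+(1-\alpha)Q_k)/\tau)$, which has the same $a$-dependence as $\pi[Q_{k+1}]$; since both are normalised over $a$, they coincide.

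Finally I would match the value component, which is where the parametrisation pays off most cleanly. Because $\pi_{k+1}=\pi[Q_{k+1}]$ is Boltzmann with respect to $Q_{k+1}$, inverting \cref{eq:boltzmann policy update} gives the pointwise identity $Q_{k+1}(s,a)-\tau\log(\pi_{k+1}(a|s)/\pi_b(a|s)) = V[Q_{k+1}](s)$, whose right-hand side is independent of $a$. Hence the target of the value regression \cref{eq:v loss detail} has conditional mean
\begin{equation*}
    \E[y(s,a)\mid s,a] = Q_{k+1}(s,a) - \tau\log\frac{\pi_{k+1}(a|s)}{\pi_b(a|s)} = V[Q_{k+1}](s),
\end{equation*}
independent of $a$, so the minimiser of \cref{eq:v loss} is $V_{k+1}(s)=V[Q_{k+1}](s)$. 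Combined with $\pi_{k+1}=\pi[Q_{k+1}]$, this exhibits $(\pi_{k+1},V_{k+1})$ as the image of $Q_{k+1}$ under the mapping, closing the induction.

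I expect the main obstacle to be bookkeeping rather than any single deep step: carefully tracking which quantities are state-only (and thus absorbed in the policy normalisation or irrelevant to the value target), and verifying that the several $\lambda$-returns genuinely agree under the correspondence via the simplified TD error. The one load-bearing calculation is the Lagrange-multiplier solution of \cref{eq:pi objective} together with the algebraic simplification driven by \cref{eq:beta definition}; by contrast, the value-matching step is essentially immediate once the Boltzmann identity rendering the target $a$-independent is observed.
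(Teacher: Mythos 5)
Your proof is correct, and its overall architecture is the same as the paper's (\cref{app:full derivation}): both identify the $\ell^2$ minimiser of \cref{eq:q-space update rule} with the conditional expectation, so that $Q_{k+1}$ is the $\alpha$-conservative $\lambda$-Bellman backup $\alpha\,\mathcal{B}^\lambda Q_k + (1-\alpha)Q_k$, and both dispose of the value step by observing that the Boltzmann identity makes the conditional mean of the target in \cref{eq:v loss detail} equal to $V[Q_{k+1}](s)$, independent of $a$. The one place you genuinely diverge is the policy step. The paper argues \emph{backwards}: it starts from $\KL(\pi\,\|\,\pi[Q_{k+1}])$, whose unique minimiser is trivially $\pi[Q_{k+1}]$, and shows by a chain of argmax-preserving positive-affine equivalences that this quantity coincides with the objective of \cref{eq:pi objective}, with \cref{eq:beta definition} emerging when the $(1-\alpha)$-weighted term is recognised as $\KL(\pi\,\|\,\pi_k)$. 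You argue \emph{forwards}: solve the first-order conditions of \cref{eq:pi objective} by Lagrange multipliers to get the geometric-mixture form $\pi_b^{\tau/(\beta+\tau)}\pi_k^{\beta/(\beta+\tau)}e^{\hat A/(\beta+\tau)}$, then check it equals $\pi[Q_{k+1}]$ after substituting $\beta+\tau=\tau/\alpha$ and $\pi_k=\pi_b\,e^{(Q_k-V_k)/\tau}$. Your direction costs an explicit concavity/Lagrange computation that the paper's $\simeq$ chain sidesteps, but it buys a transparent closed form for $\pi_{k+1}$ as a $\pi_b$--$\pi_k$ geometric interpolation and makes visible why \cref{eq:beta definition} is exactly the choice that collapses the exponents to $\alpha$ and $1-\alpha$. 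The bookkeeping points you flag (state-only terms absorbed in the normalisation; linearity of the first term in the advantage, so that replacing the stochastic return by its conditional mean is harmless) are handled the same way in the paper, so there is no gap.
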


\subsection{Discussion of theoretical result}

To see the significance of these result, we note that if provided states are sampled on-policy, the objective in \cref{eq:pi objective} is identical to the PPO-penalty objective from \cref{eq:ppo-penalty objective}, except for the reversal of the KL-divergence between the new policy and the previous policy. Since soft action-value methods with conservative backup operators implicitly maximise this objective, we can interpret them as performing a variant of proximal policy optimisation, with \cref{eq:beta definition} providing a mapping between the degree of conservativeness $\alpha$ and the previous iterate regularisation strength $\beta$. 

Additionally, this result allows us to reinterpret the roll of bootstrapping within proximal policy optimisation algorithms. While bootstrapping for action-value methods has its grounding in iterative policy evaluation procedures, bootstrapping for proximal policy optimisation algorithms serves solely as a variance reduction technique which introduces undesirable bias into the process. However, the equivalence relationship laid out above allows us to reinterpret bootstrapping to form advantage estimates as implicitly implementing iterative policy evaluation. 

\section{Experiments}\label{sec:experiments}

We test \KLQlam{} against our baseline of PPO on two standard datasets used in existing works.
The first is \textbf{TL;DR} \citep{syed_dataset_2018}, for \textit{summarisation}, where the model must generate summaries of a Reddit posts.
The second is Anthropic-\textbf{HH} \citep{bai_training_2022}, for \textit{single-turn dialogue}, where the model must provide a final response to a conversation between a user and an assistant that is helpful and harmless.

We initialise our policy with supervised finetuned (SFT) versions of Pythia-1B \citep{biderman_pythia_2023}, specific to each dataset.
A Pythia-1B based reward model is used for TL;DR, and a GPT2-large \citep{radford_language_2019} based reward model is used for HH.
We ran experiments using a customised fork of the TRL library \citep{werra_trl_2020}, and inherited default hyperparameter values from the TRL implementation of PPO.
Unless otherwise specified, each training run uses these default hyperparameters, and consists of 75,000 episodes, taking around 5 hours on 4 A100 GPUs.
\Cref{app:exp_details} contains further details of our training setup.
We note that while our experiments are preliminary in scale, they serve as an important proof-of-concept for our theoretical insights.

We first present representative training curves for \KLQlam{} and PPO in \cref{sec:exp_train_curves}, which show that KLQ is competitive out-of-the-box.

We then present results from more extensive experiments to investigate the effect of varying the KL-penalty coefficient, $\tau$.
Since \KLQlam{} incorporates KL-regularisation in a more sophisticated manner than PPO, we suspected there may be a different trade-off between reward and KL-divergence for different KL-penalty coefficients: we investigate the Pareto frontier of this trade-off in \cref{sec:exp_reward_kl_pareto}.
We also suspected that there could be differences in qualitative aspects of our final completions: 
We explore this effect in \cref{sec:llm_judge} with LLM-as-a-Judge win-rates between the two models across the KL-penalty coefficients.

Additional ablation studies to understand the effects of varying learning rate and $\lambda$ are presented in \cref{sec:ablation_learning_rate,sec:ablation_lambda} respectively.

\subsection{Training Curves}\label{sec:exp_train_curves}
\begin{figure}[h]
    \centering
    \begin{subfigure}[b]{0.45\textwidth}
        \includegraphics[width=\textwidth]{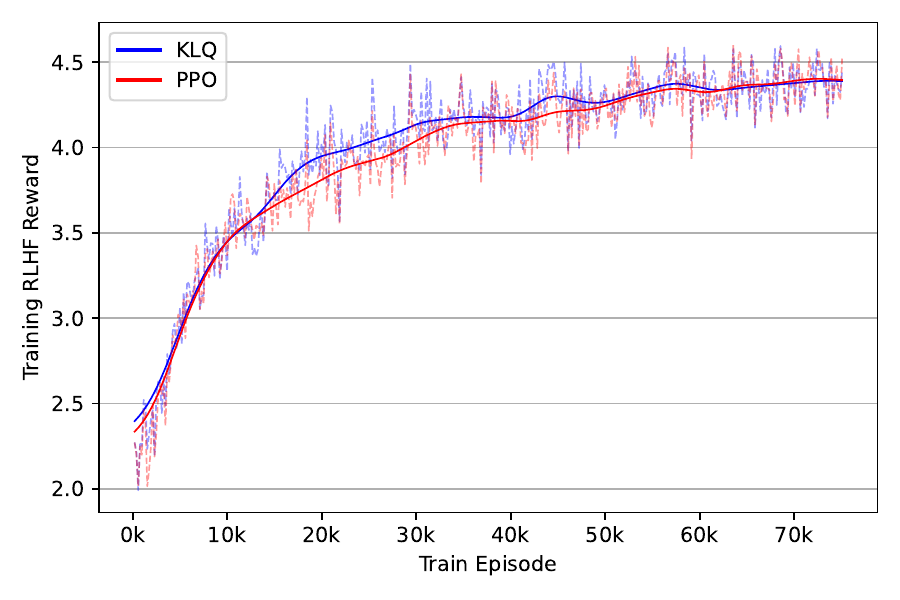}
        \caption{TL;DR}
    \end{subfigure}
    \begin{subfigure}[b]{0.45\textwidth}
        \includegraphics[width=\textwidth]{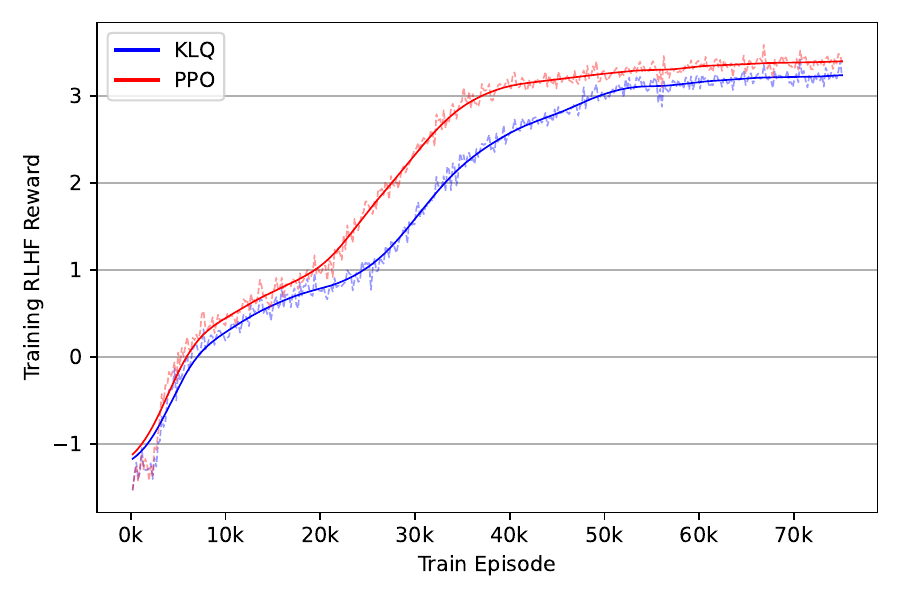}
        \caption{HH}
    \end{subfigure}
    \caption{RLHF rewards over training for \KLQlam{} and PPO on (a) TL;DR and (b) HH. We observe similar performance between the two algorithms, especially with respect to the final reward achieved.}
    \label{fig:train_curves}
\end{figure}

\Cref{fig:train_curves} plots RLHF rewards over training for \KLQlam{} and PPO on TL;DR and HH.
It shows \KLQlam{} and PPO achieving similar final results, with \KLQlam{} slightly lagging behind PPO at times on HH.
The validation curves show similar behaviour and are given in \cref{sec:exp_val_curves}, along with training wall-clock times, which are consistent for both algorithms.
This is particularly promising given that the default hyperparameters we used had previously been optimised for PPO, and we may expect significant improvements to KLQ from further hyperparameter tuning.

\subsection{Pareto Frontier of Reward and KL-Divergence}\label{sec:exp_reward_kl_pareto}

\begin{figure}[h]
    \centering
    \begin{subfigure}[b]{0.45\textwidth}
        \includegraphics[width=\textwidth]{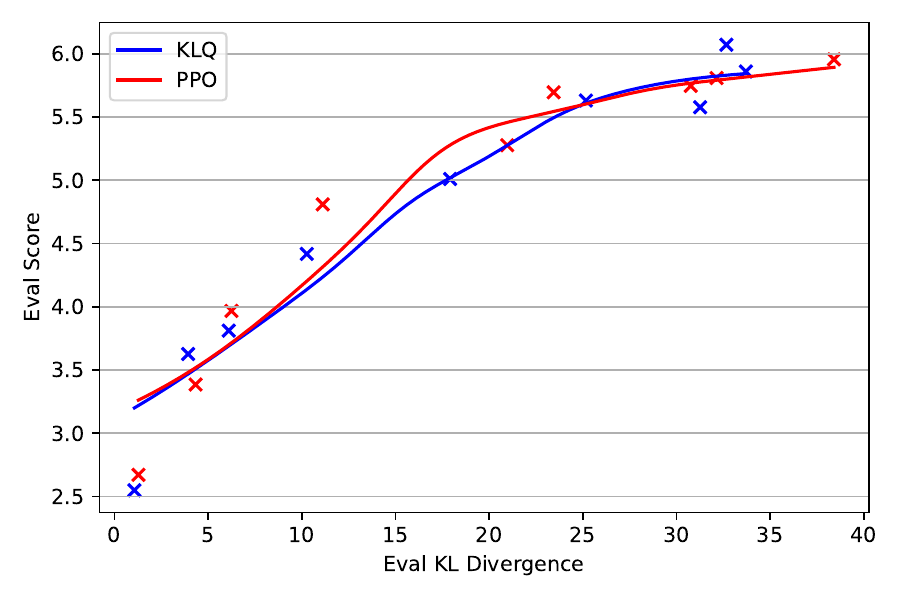}
        \caption{TL;DR}
    \end{subfigure}
    \begin{subfigure}[b]{0.45\textwidth}
        \includegraphics[width=\textwidth]{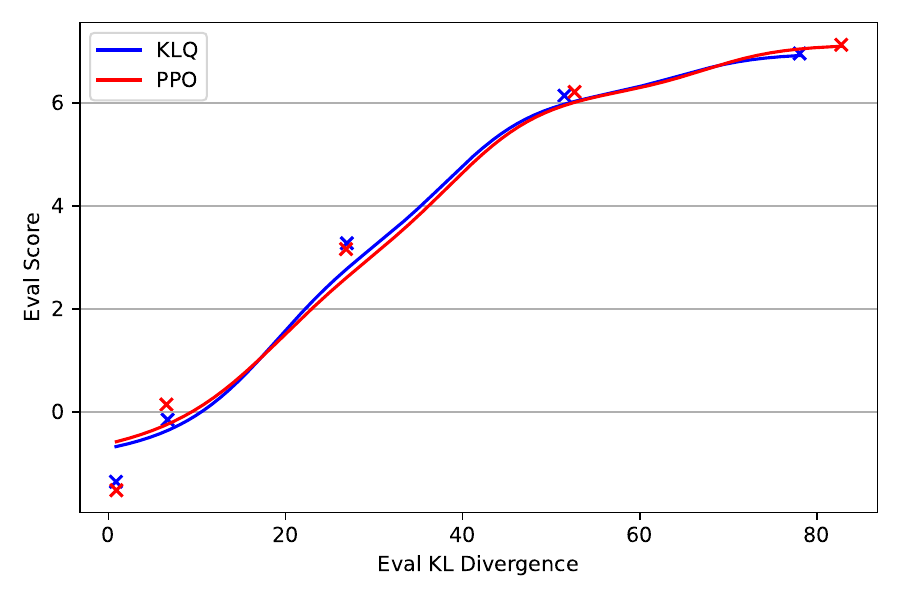}
        \caption{HH}
    \end{subfigure}
    \caption{Relationship between evaluation reward model score and KL-divergence to the SFT policy at the end of training. Curves are plotted using Gaussian smoothing. Whilst there are some differences, they mostly appear to be due to noise.}
    \label{fig:pareto}
\end{figure}

\Cref{fig:pareto} illustrates the relationship between reward model score and KL-divergence to the SFT policy at the end of training, when evaluated on a held-out validation set.
The KL-penalty coefficient, $\tau$, was varied to generate a diverse range of samples.
On both datasets we see very similar performance of the two algorithms.
The lack of monotonicity in the plot also suggests that there is significant statistical noise in the training process. This is particularly clear from the corresponding training curves, which we plot in \cref{sec:kl_val_curves}.

\subsection{LLM judges prefer KLQ}\label{sec:llm_judge}
\begin{figure}[t]
    \centering
    \includegraphics[width=0.45\textwidth]{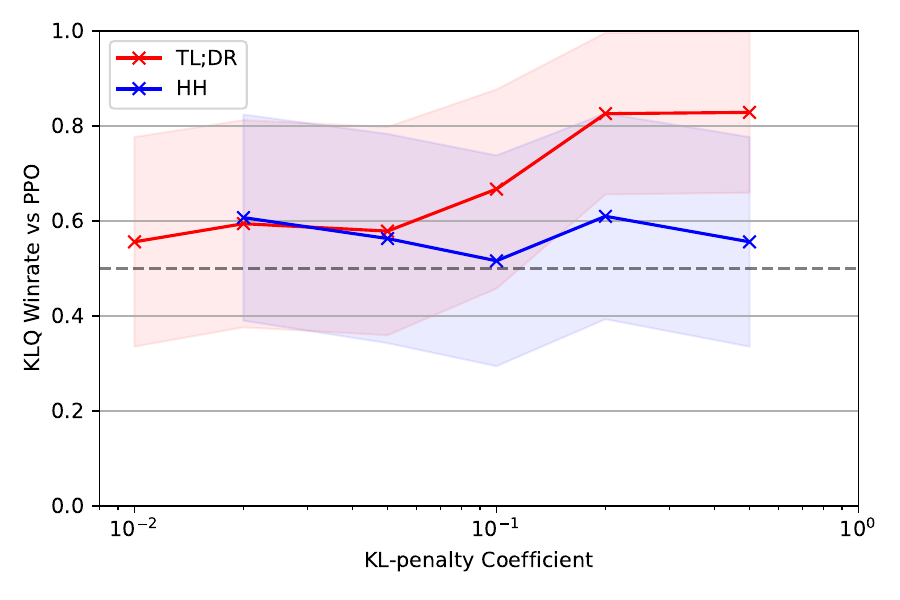}
    \caption{\KLQlam{}'s win-rate vs PPO on validation set prompt completions evaluated using LLM-as-a-Judge, across various KL-penalty coefficients. Confidence interval shown is Jeffrey's Interval with $\alpha=0.2$.}
    \label{fig:llm_judge}
\end{figure}

\Cref{fig:llm_judge} shows \KLQlam{}'s win-rate against PPO on validation set prompt completions evaluated using GPT-4o mini \citep{openai_gpt-4o_2024} as a judge.
We compare completions from the final models trained via each algorithm across a variety of different KL-penalty coefficients.
For each dataset, we use a fixed set of 32 validation prompts.
To enable Chain-of-Thought reasoning and aid interpretability, we prompt our judge to first compare the completions qualitatively before giving a preference;
\Cref{app:llm_judge} displays the full prompts we used to instruct our judge.
To control for ordering effects we present each prompt-completion pair to the judge in both orders (A vs B and B vs A); this gives a total of 64 queries per KL-penalty coefficient value per dataset.

Despite the similar performance of \KLQlam{} and PPO in \cref{sec:exp_train_curves,sec:exp_reward_kl_pareto}, here we see \KLQlam{} consistently outperforming PPO.
While the confidence intervals are wide due to the limited number of test cases, \KLQlam{} maintains a higher win rate across all tested KL-penalty coefficients on both datasets, with particularly strong performance at higher coefficients on TL;DR. These results suggest that \KLQlam{} may achieve better generalization than PPO, producing higher-quality outputs while matching PPO on quantitative metrics.
We hypothesise that this could be due to \KLQlam{}'s more theoretically grounded handling of the KL-divergence constraint.

\section{Related Work}\label{sec:related-work}

\subsection{Reinforcement learning (RL)}
\paragraph{Entropy-regularised RL.}
The KL-regularised setting specialises to the more familiar entropy regularised setting by setting the reference policy to be an (unnormalised) constant policy. 
The entropy regularised setting has its roots in inverse RL \citep{ziebart_maximum_2008}, but has recently become very popular for continuous control with the SQL \citep{haarnoja_reinforcement_2017} and SAC \citep{haarnoja_soft_2019} algorithms. Analogous versions of these algorithms have been developed for the discrete setting, namely Soft-DQN \citep{schulman_equivalence_2018,vieillard_munchausen_2020} and SAC-Discrete \citep{christodoulou_soft_2019} respectively. Of these, the closest to \KLQlam{} is Soft-DQN \citep{schulman_equivalence_2018, vieillard_munchausen_2020}. \KLQlam{} differentiates itself from Soft-DQN in the following ways: \KLQlam{} is on-policy, while Soft-DQN is off-policy; \KLQlam{} utilises a special decomposition of the action-value function, \eqref{eq:action-value decomposition}; and \KLQlam{} uses $\lambda$-returns rather than simple one-step returns.

\paragraph{Action-value function decomposition}\label{sec:action-value function decomposition related works}

The decomposition of the action-value function, \eqref{eq:action-value decomposition}, has appeared previously within the RL literature. Here we highlight two works which make use of it, and explain how we build on these. 

\citet{schulman_equivalence_2018} demonstrates that, when the action-value function is decomposed in this way, gradient steps on the action-value loss are equivalent to a mixture of policy gradient steps (for the KL-regularised setting) and gradient steps on the state-value loss. This was a core inspiration for \cref{sec:equivalence}. However, in order for their equivalence to hold, new on-policy rollouts must be sampled after every gradient step on the action-value function. Accordingly, \KLQlam{} is not reducible to a simple policy gradient method, since we perform multiple epochs of minibatch gradient descent on the action-value loss, \eqref{eq:ell-2 loss}, for each new set of on-policy rollouts, as opposed to a single step. Additionally, \citet{schulman_equivalence_2018} is primarily a theoretical contribution, and does not propose a novel algorithm which relies upon this equivalence. Our work aims to fill this gap with \KLQlam{}. 

\citet{zhu_value-based_2021} presents the VCWCV algorithm, which utilises a similar action-value decomposition to \KLQlam{}, with two key differences. First, VCWCV considers only the entropy regularised setting, taking the reference policy to be the (unnormalised) constant policy. Secondly, VCWCV is a continuous control algorithm, and parametrises the policy as a Beta distribution conditional on the state. This parametrisation places a significant restriction on the expressivity of the Q-function. By contrast, our decomposition in the discrete setting places no restrictions on the form of the Q-function. \KLQlam{} is further differentiated from VCWCV by its use of $\lambda$-returns and on-policy sampling.

\paragraph{$\lambda$-returns.} The \KLQlam{} algorithm uses the $\lambda$-return action-value estimator, \eqref{eq:lambda-return}, with the TD-error given by \eqref{eq:td-error with boltzmann state-value}. This estimator corresponds to the KL-regularised version of the on-policy $Q^\pi(\lambda)$ estimator \citep{harutyunyan_q_2016} for policy evaluation. In fact, because our policy $\pi$ is always Boltzmann with respect to the action-value function, our estimator also coincides with the KL-regularised version of the $Q^*(\lambda)$ estimator. \citet{harutyunyan_q_2016} demonstrated that, provided the sampling distribution $\mu$ is sufficiently close to the target policy $\pi$, the $Q^\pi(\lambda)$ estimator still defines a valid contraction mapping with fixed point given by the target action-value function. This raises the possibility of an off-policy formulation of \KLQlam{}, which reuses previously sampled trajectories. However, since we are typically not limited by rollout generation in the language modeling setting, we decided not to make this design choice in \KLQlam{}. 

\subsection{RLHF}

\paragraph{Group Rollouts.}
Various recent works have derived new algorithms for RLHF using grouped rollouts. By comparing preference scores given to different completions of the same prompt, it is possible to estimate advantages directly without the use of a value head. Some notable examples of this include: RLOO \citep{ahmadian_back_2024}, which apply the elegant REINFORCE Leave-One-Out procedure of \citet{kool_buy_2019}; GRPO \citep{shao_deepseekmath_2024}, which was used for DeepSeekMath; and CoPG \cite{flet-berliac_contrastive_2025}, a very flexible recent proposal that uses a squared loss to align differences in policy log probabilities to differences in preference score. Note however that all these methods work in the completion-level setting and do not have an explicit method for token-level attribution.

\paragraph{DPO.} \KLQlam{} leverages a conversion relationship between action-values and policies, \eqref{eq:action-value decomposition}, which allows us to train a policy using a value-based objective. Similarly, Direct Preference Optimisation (DPO) \citep{rafailov_direct_2024} leverages a conversion relationship between reward models $R_\phi$ and policies in the contextual bandit setting to train a policy using a reward modelling objective. The key practical advantage of DPO (shared by subsequent preference optimisation techniques) is that it allows the policy $\pi$ to be trained directly on the dataset of preferences, avoiding the need to use RL training.
In interesting follow-up work, \citet{rafailov_r_2024}, show that DPO can in fact be interpreted as a form of $Q$-learning in the \emph{token-level} MDP.
Our work can also be interpreted as $Q$-learning in the token-level setting, and provides a further illustration that `your language model is secretly an action-value function'.

\section{Discussion}

\subsection{Off-policy flexibility}
In common with PPO, \KLQlam{} is an \emph{on-policy} algorithm, training only using rollouts generated from the current policy. Although policy-gradient methods can be made off-policy, doing so often entails significant complications \citep{wang_sample_2017, espeholt_impala_2018} or a loss of theoretical justification \citep{lillicrap_continuous_2019}. In contrast, a core advantage of action-value methods is that they can retain their theoretical grounding regardless of whether data is sampled on- or off-policy \citep{haarnoja_reinforcement_2017, mnih_human-level_2015}.  In future work we hope to explore the use of off-policy sampling as an extension to \KLQlam{}. In particular, off-policy data can include expert demonstrations---obviating the need for an SFT stage altogether---or prioritised experience replay \citep{schaul_prioritized_2016}, allowing the reuse of trajectories that were particularly insightful.

\subsection{State-value head learning}
Recent algorithms that use group rollouts do not need to add a state-value head to their language model for LM-RLHF.
\citet{ahmadian_back_2024} argue that the key benefit of this technique is to eliminate the bias introduced from bootstrapping using value estimates.
Our initial $\lambda$-ablation experiments (see App. \ref{sec:ablation_lambda}) suggest a benefit to bootstrapping with a value head, at least for the action-value learning case with a single completion per prompt. 

The value head is an essential component of our \KLQlam{} method, and we suspect there may be benefits of explicit token-level attribution. In future work we hope to investigate ways to combine this with the benefits of group-rollouts.

\subsection{Advantage normalisation}
The TRL implementation of PPO uses advantage whitening, and claims that this is important.
It is not clear how to establish an analogue of this procedure for KLQ due to the combined $\pi, V$ objective.
There may be significant empirical improvements from establishing an appropriate whitening procedure for KLQ.

\subsection{Limitations}
The main limitation with this work at present is that our experiments are relatively preliminary compared to the most recent works in the LM-RLHF space.
Firstly, each training run took around 5 hours on 4 A100 GPUs; we did not have the computational resources to perform longer experiments, large hyper-parameter sweeps, or many repeats to reduce the effects of noise.
Secondly, we used existing SFT and reward models from the Huggingface Hub. Whilst we are thankful of the community and the open-access models, it was hard to evaluate or guarantee the quality and properties of these models.
A more robust evaluation pipeline would involve training our own SFT and reward models tailored to our specific needs.
We would also be excited to evaluate our methodology on multi-step reasoning tasks, which are currently an exciting application area for language model RL.
Finally, there are certain aspects of the equivalence in 

\section{Conclusion}

We propose \KLQlam{}, a novel algorithm for Language Model-Reinforcement Learning from Human Feedback (LM-RLHF).
We benchmark our method against Proximal Policy Optimisation (PPO), a canonical algorithm from previous works. %
\KLQlam{} has a cleaner theoretical motivation than PPO, performs similarly at optimising the LM-RLHF objective, and generates completions that are often preferred in LLM-as-a-judge evaluations. 

We provide an analytic argument to show that optimising the \KLQlam{} objective is equivalent to optimising a modified version of the PPO objective, in a certain specific sense. This demonstrates a heretofore unappreciated connection between proximal policy optimisation algorithms and action-value methods. 
\KLQlam{}'s clean motivation and close theoretical connection to PPO helps give an insight into the learning process of LM-RLHF, and points towards deeper reasons as to why PPO is effective.

\section*{Author Contributions}
JB, LW, and EY spent an equal number of hours on the project, and contributed to writing code, running empirics and writing the manuscript. EY initially conceived the project, the KLQ algorithm and developed all the theoretical results. SB provided advice and guidance throughout the project.

\section*{Acknowledgements}
EY and JB would like to thank their PhD supervisors Yashar Ahmadian and Robert Mullins for their guidance over the project, and providing access to compute resources. All authors would like to thank Usman Anwar for feedback on early iterations of the work, and to the Meridian office for providing a venue for in-person collaboration.

\section*{Impact Statement}

This goal of this paper is to develop more effective fine-tuning procedures for large language models that produce more preferable completions. This goal should lead to more aligned models, which is clearly in the public interest; however, this could in turn lead to more powerful models being deployed, increasing the corresponding societal impacts and risks.

\bibliography{main}
\bibliographystyle{main.bst}

\newpage
\appendix
\onecolumn

\section{Theoretical Background}\label{app:theoretical results}

In this appendix we recount core theoretical results for the KL-regularised setting and $\lambda$-returns. These results follow straightforwardly from the analogous results in the entropy-regularised setting, so do not constitute a major contribution of our work. We include them here only for clarity of presentation and completeness of exposition. All results in this section apply to a finite MDP, \ie, $|\mathcal{S} \times \mathcal{A}| < \infty$ with a discount factor $\gamma \in [0,1)$ and rewards bounded above by $R_{\rm max}$. We always assume that $\lambda \in [0,1]$. 

The \emph{soft action-value function} for a policy $\pi$, $Q^\pi(s,a)$ is defined by:
\begin{equation}\label{eq:soft action-value function}
    Q^\pi(s,a) \defeq \E_{\pi}\left[ G_t \middle| s_t = s, a_t = a \right] =  \E_{\pi}\left[ \sum_{k = t}^{T-1} \gamma^{k-t}\left( r_{k+1} - \tau \gamma D_\pi\left(s_{k+1}\right) \right) \middle| s_t = s, a_t = a \right] 
\end{equation}
where $G_t$ is the KL-augmented return, \eqref{eq:kl-augmented return}, and $D_\pi(s)$ is the KL-divergence between $\pi$ and $\pi_{\rm SFT}$ at $s$. We next present policy evaluation and improvement theorems for this setting.

\begin{lemma}[Bellman recursion for the soft action-value function]\label{thrm:bellman recursion for soft action-values}
    Let $B^\pi$ be the soft Bellman operator for $\pi$, which acts on action-value functions via:
    \begin{equation}\label{eq:soft bellman operator}
        \left[ B^\pi Q \right](s,a) = \E_\pi\left[ r_{t+1} + \gamma\left( Q(s_{t+1},a_{t+1}) - \tau D_\pi \left( s_{t+1} \right) \right) \middle| s = s_t,a = a_t  \right].
    \end{equation}
    Then the soft action-value function, $Q^\pi(s,a)$, satisfies the Bellman recursion relationship, $B^\pi Q^\pi = Q^\pi$.
\end{lemma}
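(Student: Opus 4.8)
The plan is to verify the fixed-point identity directly from the definition of $Q^\pi$ in \eqref{eq:soft action-value function}, by peeling off the first term of the discounted sum and recognising the remainder as a $\gamma$-discounted copy of the return that starts one step later. First I would record the pathwise one-step decomposition of the KL-augmented return. Isolating the $k=t$ summand and factoring $\gamma$ out of the rest gives
\begin{equation*}
    G_t = \left( r_{t+1} - \tau\gamma D_\pi(s_{t+1}) \right) + \gamma \sum_{k=t+1}^{T-1} \gamma^{k-(t+1)}\left( r_{k+1} - \tau\gamma D_\pi(s_{k+1}) \right) = \left( r_{t+1} - \tau\gamma D_\pi(s_{t+1}) \right) + \gamma\, G_{t+1},
\end{equation*}
which holds on the event $\{T-1 \ge t\}$, with the convention that empty sums vanish once the episode has terminated.

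Second, I would take the conditional expectation $\E_\pi[\,\cdot \mid s_t = s, a_t = a]$ of both sides. The first summand passes through immediately; for the term $\gamma\,\E_\pi[G_{t+1}\mid s_t=s,a_t=a]$ I would apply the tower property, conditioning additionally on $(s_{t+1},a_{t+1})$, and then invoke the Markov property together with the time-homogeneity of the dynamics and of $\pi$. This identifies the inner expectation with the soft action-value at the (random) successor pair,
\begin{equation*}
    \E_\pi\!\left[ G_{t+1} \mid s_{t+1}, a_{t+1} \right] = Q^\pi(s_{t+1}, a_{t+1}),
\end{equation*}
so that $\gamma\,\E_\pi[G_{t+1}\mid s_t=s,a_t=a] = \gamma\,\E_\pi[Q^\pi(s_{t+1},a_{t+1}) \mid s_t=s, a_t=a]$. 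Combining the two contributions and regrouping the $-\tau\gamma D_\pi(s_{t+1})$ term with the value term yields exactly
\begin{equation*}
    Q^\pi(s,a) = \E_\pi\!\left[ r_{t+1} + \gamma\left( Q^\pi(s_{t+1}, a_{t+1}) - \tau D_\pi(s_{t+1}) \right) \,\middle|\, s_t = s, a_t = a \right] = \left[ B^\pi Q^\pi \right](s,a),
\end{equation*}
which is the claimed recursion.

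The routine prerequisites are that every expectation above is finite and that the manipulations are legitimate. Since the MDP is finite, $\pi$ and $\pi_b$ range over a compact simplex so $D_\pi$ is bounded, the rewards are bounded by $R_{\rm max}$, and $\gamma \in [0,1)$; hence $G_t$ is absolutely summable and dominated by a geometric series, and dominated convergence justifies both interchanging the expectation with the (possibly infinite, when $T=\infty$) sum and the tower-property splitting. The one genuinely non-mechanical point is the Markov / time-homogeneity step: I must argue that the conditional law of $(r_{k+1}, s_{k+1})_{k \ge t+1}$ given the entire past depends only on $(s_{t+1}, a_{t+1})$, and that the resulting functional of that pair is the same $Q^\pi$ of \eqref{eq:soft action-value function} rather than a time-indexed variant. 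This is precisely where stationarity of the transition kernel and of the policy is used, and it is the only step requiring more than bookkeeping.
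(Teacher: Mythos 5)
Your proposal is correct and follows essentially the same route as the paper's proof: peel off the $k=t$ summand to write $G_t = (r_{t+1} - \tau\gamma D_\pi(s_{t+1})) + \gamma G_{t+1}$, then apply the tower property so that $\E_\pi[G_{t+1}\mid s_{t+1},a_{t+1}] = Q^\pi(s_{t+1},a_{t+1})$. Your additional remarks on integrability and on the Markov/time-homogeneity step are sound but go beyond what the paper spells out; they do not change the argument.
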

\begin{proof}[Proof of Lemma \ref{thrm:bellman recursion for soft action-values}]
We begin with the definition of the soft action-value function:
\begin{align}
    Q^\pi(s,a) &= \E_{\pi}\left[ G_t \middle| s_t = s, a_t = a \right] \\
    &= \E_{\pi}\left[ \sum_{k = t}^{T-1} \gamma^{k-t}\left( r_{k+1} - \tau \gamma D_\pi\left(s_{k+1}\right) \right) \middle| s_t = s, a_t = a \right]
\end{align}

Splitting the sum to separate the immediate reward and future returns:
\begin{align}
    Q^\pi(s,a) &= \E_{\pi}\left[ r_{t+1} - \tau \gamma D_\pi(s_{t+1}) + \gamma\sum_{k = t+1}^{T-1} \gamma^{k-(t+1)}\left( r_{k+1} - \tau \gamma D_\pi\left(s_{k+1}\right) \right) \middle| s_t = s, a_t = a \right]
\end{align}

The second term inside the expectation is precisely the KL-augmented return from time $t+1$, denoted as $G_{t+1}$. By the definition of the soft action-value function, $\E_{\pi}\left[ G_{t+1} | s_{t+1}, a_{t+1} \right] = Q^\pi(s_{t+1},a_{t+1})$. Therefore:
\begin{align}
    Q^\pi(s,a) &= \E_{\pi}\left[ r_{t+1} - \tau \gamma D_\pi\left( s_{t+1} \right) + \gamma Q^\pi(s_{t+1},a_{t+1}) \middle| s_t = s, a_t = a \right] \\
    &= \E_{\pi}\left[ r_{t+1} + \gamma \left(  Q^\pi(s_{t+1},a_{t+1}) - \tau D_\pi\left( s_{t+1} \right) \right) \middle| s_t = s, a_t = a \right] \\
    &= \left[ B^\pi Q^\pi \right](s,a)
\end{align}

This confirms that $Q^\pi$ satisfies the Bellman recursion relationship, $B^\pi Q^\pi = Q^\pi$.
\end{proof}

\begin{theorem}[KL-regularised policy evaluation]\label{thrm:kl-regularised policy evaluation}
    The soft Bellman operator is a contraction mapping in the $\ell^\infty$ norm with contraction modulus $\gamma$.
    Accordingly, the soft action-value function $Q^\pi(s,a)$ is the unique fixed point of $B^\pi$, and any sequence of iterates converges to $Q^\pi(s,a)$ in the $\ell^\infty$ norm. 
\end{theorem}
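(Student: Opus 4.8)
The plan is to establish the contraction estimate directly and then invoke the Banach fixed-point theorem. The crucial structural observation is that the soft Bellman operator $B^\pi$ defined in \eqref{eq:soft bellman operator} is \emph{affine} in its action-value argument: the immediate reward $r_{t+1}$ and the regularisation term $\tau D_\pi(s_{t+1})$ depend only on the fixed policy $\pi$ and reference policy $\pi_b$, and not at all on $Q$. Consequently, when we compare the operator applied to two different action-value functions, all the $Q$-independent terms will cancel, and only the bootstrapped term $\gamma Q(s_{t+1}, a_{t+1})$ survives. This is what reduces the verification to a one-line estimate.

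Concretely, I would take two arbitrary bounded action-value functions $Q_1, Q_2$ and subtract, so that
\begin{equation*}
    [B^\pi Q_1](s,a) - [B^\pi Q_2](s,a) = \gamma\, \E_\pi\!\left[ Q_1(s_{t+1},a_{t+1}) - Q_2(s_{t+1},a_{t+1}) \,\middle|\, s_t = s, a_t = a \right],
\end{equation*}
the reward and KL terms having cancelled exactly. Bounding the right-hand side using $|\E_\pi[\,\cdot\,]| \le \E_\pi[|\cdot|] \le \|Q_1 - Q_2\|_\infty$ (valid because the expectation is taken over a probability distribution and the integrand is dominated by its sup norm), I obtain, uniformly in $(s,a)$,
\begin{equation*}
    \left| [B^\pi Q_1](s,a) - [B^\pi Q_2](s,a) \right| \le \gamma\, \|Q_1 - Q_2\|_\infty .
\end{equation*}
Taking the supremum over $(s,a)$ then yields $\|B^\pi Q_1 - B^\pi Q_2\|_\infty \le \gamma \|Q_1 - Q_2\|_\infty$, which is precisely the claimed contraction with modulus $\gamma \in [0,1)$.

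To conclude, since the MDP is finite the space of action-value functions is finite-dimensional and complete under $\|\cdot\|_\infty$, so the Banach fixed-point theorem guarantees a unique fixed point to which every iterate sequence converges geometrically. Lemma \ref{thrm:bellman recursion for soft action-values} already exhibits $Q^\pi$ as a fixed point of $B^\pi$, so by uniqueness the iterates must converge to $Q^\pi$. The argument is essentially routine once the affine structure is recognised; the only point genuinely requiring care is confirming that $Q^\pi$ is itself a legitimate (bounded) element of the metric space on which Banach applies, which follows from the standing assumptions of a finite MDP, $\gamma < 1$, rewards bounded above, and finite per-state KL divergence $D_\pi$.
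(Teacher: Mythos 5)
Your proposal is correct and follows essentially the same route as the paper's own proof: cancel the $Q$-independent reward and KL terms in the difference $B^\pi Q_1 - B^\pi Q_2$, bound the remaining expectation by $\gamma\|Q_1-Q_2\|_\infty$, and then invoke the Banach fixed-point theorem together with Lemma~\ref{thrm:bellman recursion for soft action-values} to identify $Q^\pi$ as the unique fixed point. The only cosmetic difference is that you frame the cancellation as a consequence of the affine structure of $B^\pi$ and explicitly note well-definedness of $Q^\pi$, which the paper leaves implicit.
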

\begin{proof}[Proof of Theorem \ref{thrm:kl-regularised policy evaluation}]
To prove that $B^\pi$ is a contraction mapping in the $\ell^\infty$ norm with contraction modulus $\gamma$, we need to show that for any two action-value functions $Q_1$ and $Q_2$:
\begin{equation}
    \|B^\pi Q_1 - B^\pi Q_2\|_{\infty} \leq \gamma \|Q_1 - Q_2\|_{\infty}
\end{equation}

Let us expand the difference for any state-action pair $(s,a)$:
\begin{align}
    &~ [B^\pi Q_1](s,a) - [B^\pi Q_2](s,a) \nonumber \\
    &= \E_\pi\left[ r_{t+1} + \gamma\left( Q_1(s_{t+1},a_{t+1}) - \tau D_\pi\left( s_{t+1} \right) \right) \middle| s = s_t,a = a_t  \right] \nonumber \\ 
    & - \E_\pi\left[ r_{t+1} + \gamma\left( Q_2(s_{t+1},a_{t+1}) - \tau D_\pi\left( s_{t+1} \right) \right) \middle| s = s_t,a = a_t  \right]
\end{align}

The reward terms $r_{t+1}$ and KL divergence terms $D_\pi(s_{t+1})$ cancel out, leaving:
\begin{align}
    |[B^\pi Q_1](s,a) - [B^\pi Q_2](s,a)| &= \gamma \left|\E_\pi\left[ Q_1(s_{t+1},a_{t+1}) - Q_2(s_{t+1},a_{t+1}) \middle| s = s_t,a = a_t  \right] \right|
\end{align}

By Jensen's inequality, $|\E[X]| \leq \E[|X|]$, we have:
\begin{align}
    |[B^\pi Q_1](s,a) - [B^\pi Q_2](s,a)| &\leq \gamma \E_\pi\left[ |Q_1(s_{t+1},a_{t+1}) - Q_2(s_{t+1},a_{t+1})| \middle| s = s_t,a = a_t  \right]
\end{align}

Since $\|Q_1 - Q_2\|_{\infty} = \max_{s,a} |Q_1(s,a) - Q_2(s,a)|$, we know that for any state-action pair $(s_{t+1},a_{t+1})$:
\begin{equation}
    |Q_1(s_{t+1},a_{t+1}) - Q_2(s_{t+1},a_{t+1})| \leq \|Q_1 - Q_2\|_{\infty}
\end{equation}

This gives us:
\begin{align}
    |[B^\pi Q_1](s,a) - [B^\pi Q_2](s,a)| &\leq \gamma \E_\pi\left[ \|Q_1 - Q_2\|_{\infty} \middle| s = s_t,a = a_t  \right] \\
    &= \gamma \|Q_1 - Q_2\|_{\infty}
\end{align}

Since this inequality holds for all state-action pairs $(s,a)$, we have:
\begin{equation}
    \|B^\pi Q_1 - B^\pi Q_2\|_{\infty} \leq \gamma \|Q_1 - Q_2\|_{\infty}
\end{equation}

This proves that $B^\pi$ is a contraction mapping with contraction modulus $\gamma < 1$. By the Banach fixed-point theorem, $B^\pi$ has a unique fixed point, and any sequence of iterates $Q_{n+1} = B^\pi Q_n$ converges to this fixed point in the $\ell^\infty$ norm.

We've already shown in Lemma \ref{thrm:bellman recursion for soft action-values} that $Q^\pi$ is a fixed point of $B^\pi$. Therefore, $Q^\pi$ is the unique fixed point of $B^\pi$, and any sequence of iterates converges to $Q^\pi$.
\end{proof}

The $\lambda$-returns used to formulate the value estimates for our action-value function loss $\mathcal{L}(\theta)$ are related to the $\lambda$ soft Bellman operator, defined by:
\begin{equation}\label{eq:lambda-bellman operator}
    B^\pi_\lambda = (1 - \lambda) \sum_{n \geq 1} \lambda^{n-1} \left[ B^\pi \right]^n.
\end{equation}
In particular, 
\begin{equation}
    \left[ B^\pi_\lambda Q \right](s,a) = \E_\pi\left[ G^\lambda_t | s_t = s, a_t = a \right]
\end{equation}
Note that the $\lambda$ soft Bellman operator is a weighted average of $n$-step Bellman operators, $\left[ B^\pi \right]^n$, with weights $(1 - \lambda)\lambda^{n-1}$.

\begin{theorem}[$\lambda$ policy evaluation]\label{thrm:lambda policy evaluation} 
    The $\lambda$ soft Bellman operator, $B^\pi_\lambda$, is a contraction mapping in the $\ell^\infty$ norm with contraction modulus \begin{equation*}
        \gamma \left(  \frac{1 - \lambda}{1 - \lambda \gamma} \right).
    \end{equation*}
    Accordingly, the soft action-value function $Q^\pi(s,a)$ is the unique fixed point of $B^\pi_\lambda$, and any sequence of iterates converges to $Q^\pi(s,a)$ in the $\ell^\infty$ norm. 
\end{theorem}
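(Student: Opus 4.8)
The plan is to leverage the already-established contraction property of the single-step operator $B^\pi$ (Theorem~\ref{thrm:kl-regularised policy evaluation}) together with the geometric weighting built into the definition \eqref{eq:lambda-bellman operator}. First I would observe that iterating Theorem~\ref{thrm:kl-regularised policy evaluation} immediately shows that the $n$-fold composition $[B^\pi]^n$ is a contraction in the $\ell^\infty$ norm with modulus $\gamma^n$, i.e.\ $\|[B^\pi]^n Q_1 - [B^\pi]^n Q_2\|_\infty \le \gamma^n \|Q_1 - Q_2\|_\infty$ for any action-value functions $Q_1, Q_2$.

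Next, for two action-value functions $Q_1, Q_2$, I would apply the triangle inequality to the defining series:
\begin{align*}
\|B^\pi_\lambda Q_1 - B^\pi_\lambda Q_2\|_\infty
&= \left\| (1-\lambda)\sum_{n\ge 1}\lambda^{n-1}\left([B^\pi]^n Q_1 - [B^\pi]^n Q_2\right) \right\|_\infty \\
&\le (1-\lambda)\sum_{n\ge 1}\lambda^{n-1}\,\gamma^n\,\|Q_1 - Q_2\|_\infty .
\end{align*}
I would then evaluate the geometric sum as $(1-\lambda)\gamma \sum_{n\ge 1}(\lambda\gamma)^{n-1} = \gamma(1-\lambda)/(1-\lambda\gamma)$, which yields exactly the claimed modulus, and check that it is strictly less than $1$: since $\gamma < 1$ we have $\gamma(1-\lambda) = \gamma - \gamma\lambda < 1 - \lambda\gamma$, so the operator is a genuine contraction (the boundary case $\lambda=1$ giving modulus $0$ is consistent, since the full return does not bootstrap on $Q$).

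Having established the contraction, the Banach fixed-point theorem gives a unique fixed point to which every sequence of iterates converges in $\ell^\infty$. It remains only to identify this fixed point with $Q^\pi$. I would do this using the Bellman recursion $B^\pi Q^\pi = Q^\pi$ from Lemma~\ref{thrm:bellman recursion for soft action-values}, which by induction gives $[B^\pi]^n Q^\pi = Q^\pi$ for every $n$; substituting into \eqref{eq:lambda-bellman operator} and using $(1-\lambda)\sum_{n\ge 1}\lambda^{n-1} = 1$ shows $B^\pi_\lambda Q^\pi = Q^\pi$. Uniqueness then forces $Q^\pi$ to be the fixed point.

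The computations are all routine, so there is no single hard step; the only points requiring care are the justification that the infinite series converges and that the triangle inequality may be passed inside it—both of which follow from the boundedness of rewards by $R_{\rm max}$ and of the fixed per-state KL terms in the finite MDP, so that each summand is bounded by $\gamma^n$ times a finite quantity—together with the elementary verification that the resulting modulus lies below $1$.
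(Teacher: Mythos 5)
Your proposal is correct and follows essentially the same route as the paper's proof: iterate the single-step contraction to bound $[B^\pi]^n$, apply the triangle inequality to the geometric series, sum it to obtain the stated modulus, and identify the fixed point via $B^\pi Q^\pi = Q^\pi$ and the normalisation of the weights. Your added checks (that the modulus is below $1$ and that the series manipulation is justified) are sensible refinements the paper leaves implicit, but they do not change the argument.
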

\begin{proof}[Proof of Theorem \ref{thrm:lambda policy evaluation}] 
For any two action-value functions $Q_1$ and $Q_2$, we have:
\begin{align}
    \|B^\pi_\lambda Q_1 - B^\pi_\lambda Q_2\|_{\infty} &= \left\|(1 - \lambda) \sum_{n \geq 1} \lambda^{n-1} \left( \left[ B^\pi \right]^n Q_1 - \left[ B^\pi \right]^n Q_2 \right) \right\|_{\infty}
\end{align}

Using the triangle inequality:
\begin{align}
    \|B^\pi_\lambda Q_1 - B^\pi_\lambda Q_2\|_{\infty} &\leq (1 - \lambda) \sum_{n \geq 1} \lambda^{n-1} \left\| \left[ B^\pi \right]^n Q_1 - \left[ B^\pi \right]^n Q_2 \right\|_{\infty}
\end{align}

From Theorem \ref{thrm:kl-regularised policy evaluation}, we know that $B^\pi$ is a contraction mapping with modulus $\gamma$. By the properties of contraction mappings, iterating $n$ times gives:
\begin{equation}
    \left\| \left[ B^\pi \right]^n Q_1 - \left[ B^\pi \right]^n Q_2 \right\|_{\infty} \leq \gamma^n \|Q_1 - Q_2\|_{\infty}
\end{equation}

Substituting this into our inequality:
\begin{align}
    \|B^\pi_\lambda Q_1 - B^\pi_\lambda Q_2\|_{\infty} &\leq (1 - \lambda) \sum_{n \geq 1} \lambda^{n-1} \gamma^n \|Q_1 - Q_2\|_{\infty} \\
    &= (1 - \lambda) \gamma \|Q_1 - Q_2\|_{\infty} \sum_{n \geq 1} \left( \lambda \gamma \right)^{n-1} \\
    &= (1 - \lambda) \gamma \|Q_1 - Q_2\|_{\infty} \frac{1}{1 - \lambda \gamma} \\
    &= \gamma \frac{1 - \lambda}{1 - \lambda \gamma} \|Q_1 - Q_2\|_{\infty}
\end{align}
This proves that $B^\pi_\lambda$ is a contraction mapping with the stated contraction modulus.

Now we need to show that $Q^\pi$ is the unique fixed point of $B^\pi_\lambda$. We know from Lemma \ref{thrm:bellman recursion for soft action-values} that $B^\pi Q^\pi = Q^\pi$. Substituting into the definition of $B^\pi_\lambda$:
\begin{align}
    B^\pi_\lambda Q^\pi &= (1 - \lambda) \sum_{n \geq 1} \lambda^{n-1} \left[ B^\pi \right]^n Q^\pi \\
    &= (1 - \lambda) \sum_{n \geq 1} \lambda^{n-1} Q^\pi \\
    &= (1 - \lambda) Q^\pi \sum_{n \geq 1} \lambda^{n-1} \\
    &= (1 - \lambda) Q^\pi \frac{1}{1 - \lambda} \\
    &= Q^\pi
\end{align}

Therefore, $Q^\pi$ is a fixed point of $B^\pi_\lambda$. Since $B^\pi_\lambda$ is a contraction mapping, this fixed point is unique by the Banach fixed-point theorem, and any sequence of iterates converges to $Q^\pi$ in the $\ell^\infty$ norm.
\end{proof}

Having established a theoretical basis for policy evaluation by performing regression on $\lambda$-returns, we now turn to policy improvement. In the classical setting, the greedy policy can be found by setting 
\begin{equation}
    \pi(a|s) \gets \arg\max_{\pi}  \E_\pi\left[ Q(s,a) \right]. 
\end{equation}

In the KL-regularised setting, this is generalised to:
\begin{equation}\label{eq:boltzmann policy}
    \pi(a|s) \gets \arg\max_{\pi} \left\{ \E_\pi\left[ Q(s,a) - \tau \log\left( \frac{\pi(a|s)}{\pi_{\rm SFT}(a|s)} \right) \right] \right\}
\end{equation}
\begin{theorem}[Greedy KL-regularised policy]\label{thrm:greedy kl-regularised policy} The policy which solves the optimisation problem in \Cref{eq:boltzmann policy} is given by the Boltzmann policy with respect to the current action-value function $Q$, \ie, 
\begin{equation}\label{eq:boltzmann policy definition}
    \pi_B(a|s) \defeq \pi_{\rm SFT}(a|s) e^{( Q(s,a) - V_B(s) )/\tau},
\end{equation}
where $V_B(s)$ is the Boltzmann state-value function, \begin{equation}\label{eq:boltzmann state-value function definition}
    V_B(s) = \tau \log \left( \E\left[ e^{Q(s,a)/\tau} \middle| a \sim \pi_{\rm SFT}(a|s) \right] \right).
\end{equation}
\end{theorem}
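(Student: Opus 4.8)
The plan is to treat the optimisation in \cref{eq:boltzmann policy} as a pointwise problem at each fixed state $s$, since the objective depends on $\pi$ only through the conditional distribution $\pi(\cdot|s)$ and the relevant constraint is the probability simplex $\sum_a \pi(a|s) = 1$, $\pi(a|s) \ge 0$. Writing the per-state objective as
\[
    J_s(\pi) = \sum_a \pi(a|s)\left[ Q(s,a) - \tau \log\left( \frac{\pi(a|s)}{\pi_{\rm SFT}(a|s)} \right) \right],
\]
it suffices to show that $\pi_B(\cdot|s)$ from \cref{eq:boltzmann policy definition} is the unique maximiser of $J_s$.

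First I would check that $\pi_B$ is a well-defined probability distribution: substituting $V_B(s)$ from \cref{eq:boltzmann state-value function definition} into $\sum_a \pi_B(a|s)$ shows the normalisation holds by construction, with the denominator finite because the action space is finite (as assumed at the start of \cref{app:theoretical results}). The key algebraic identity, obtained by taking logarithms of \cref{eq:boltzmann policy definition}, is that
\[
    Q(s,a) - \tau \log\left( \frac{\pi_B(a|s)}{\pi_{\rm SFT}(a|s)} \right) = V_B(s)
\]
is \emph{constant in $a$}.

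The central step is a change-of-reference computation. For an arbitrary policy $\pi$, I would split the integrand of $J_s$ through $\pi_B$:
\[
    Q(s,a) - \tau \log\left( \frac{\pi(a|s)}{\pi_{\rm SFT}(a|s)} \right) = \left[ Q(s,a) - \tau \log\left( \frac{\pi_B(a|s)}{\pi_{\rm SFT}(a|s)} \right) \right] - \tau \log\left( \frac{\pi(a|s)}{\pi_B(a|s)} \right).
\]
By the constant-in-$a$ identity the bracketed term equals $V_B(s)$, so taking the expectation over $a \sim \pi(\cdot|s)$ collapses the objective to
\[
    J_s(\pi) = V_B(s) - \tau\, \KL(\pi \,||\, \pi_B)(s).
\]
Since $V_B(s)$ does not depend on $\pi$ and the KL divergence is non-negative, vanishing exactly when $\pi(\cdot|s) = \pi_B(\cdot|s)$, the unique maximiser is $\pi = \pi_B$, with optimal value $V_B(s)$.

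I do not expect a substantive obstacle, as this is the standard Gibbs variational principle; the only care needed is the finiteness of $V_B(s)$ and well-definedness of $\pi_B$, which are automatic in the finite-MDP setting. An alternative route using a Lagrange multiplier for the normalisation constraint recovers the same Boltzmann form from the stationarity condition, but would additionally require verifying that the stationary point is a maximiser — \eg via strict concavity of $J_s$ in $\pi(\cdot|s)$ — whereas the change-of-reference argument yields both optimality and uniqueness directly from non-negativity of the KL divergence.
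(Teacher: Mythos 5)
Your proposal is correct, but it takes a genuinely different route from the paper. The paper proves \cref{thrm:greedy kl-regularised policy} by the Lagrange-multiplier method: it forms the Lagrangian for the per-state objective subject to $\sum_a \pi(a|s)=1$, sets the partial derivatives to zero to obtain $\pi(a|s) \propto \pi_{\rm SFT}(a|s)e^{Q(s,a)/\tau}$, and identifies the multiplier with $V_B(s)$ via normalisation. Your change-of-reference argument instead rewrites the objective exactly as $J_s(\pi) = V_B(s) - \tau \KL(\pi\,\|\,\pi_B)(s)$ and reads off the maximiser from non-negativity of the KL divergence. Your route buys something the paper's proof of this theorem does not explicitly deliver: it certifies that the stationary point is the \emph{global} maximiser and that it is \emph{unique}, whereas the Lagrangian computation only locates a critical point and would strictly need a concavity (or equivalent) argument to conclude optimality --- a gap you correctly flag. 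The paper's route, on the other hand, derives the Boltzmann form from first principles rather than verifying a guessed answer, which is arguably more instructive if one does not already know the solution. It is worth noting that your key identity is not foreign to the paper: essentially the same decomposition, $\E_{a\sim\pi}\left[Q(s,a) - \tau\log\left(\pi(a|s)/\pi_{\rm SFT}(a|s)\right)\right] = V_B(s) - \tau \KL(\pi(\cdot|s)\,\|\,\pi_B(\cdot|s))$, is derived and used in the paper's proof of \cref{thrm:kl-regularised policy improvement}, so your proof effectively front-loads that computation to handle the greedy-policy theorem as well.
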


\begin{proof}
We shall solve the constrained optimisation problem given in \Cref{eq:boltzmann policy}. For a fixed state $s$, we can rewrite the objective function as:
\begin{equation}
    J(\pi) = \sum_a \pi(a|s) Q(s,a) - \tau \sum_a \pi(a|s) \log\left( \frac{\pi(a|s)}{\pi_{\rm SFT}(a|s)} \right)
\end{equation}

The optimisation is subject to the constraint that $\pi(\cdot|s)$ is a valid probability distribution:
\begin{equation}
    \sum_a \pi(a|s) = 1 \quad \text{and} \quad \pi(a|s) \geq 0 \quad \forall a \in \mathcal{A}
\end{equation}

Using the method of Lagrange multipliers, we introduce the Lagrangian:
\begin{equation}
    L(\pi, \lambda) = \sum_a \pi(a|s) Q(s,a) - \tau \sum_a \pi(a|s) \log\left( \frac{\pi(a|s)}{\pi_{\rm SFT}(a|s)} \right) - \lambda \left(\sum_a \pi(a|s) - 1 \right)
\end{equation}

Taking the partial derivative with respect to $\pi(a|s)$ for each action $a$:
\begin{align}
    \frac{\partial L}{\partial \pi(a|s)} &= Q(s,a) - \tau \log\left( \frac{\pi(a|s)}{\pi_{\rm SFT}(a|s)} \right) - \tau - \lambda
\end{align}

Setting this derivative to zero to find the critical points:
\begin{align}
    Q(s,a) - \tau \log\left( \frac{\pi(a|s)}{\pi_{\rm SFT}(a|s)} \right) - \tau - \lambda &= 0\\
    \tau \log\left( \frac{\pi(a|s)}{\pi_{\rm SFT}(a|s)} \right) &= Q(s,a) - \tau - \lambda
\end{align}

Taking the exponential of both sides:
\begin{align}
    \frac{\pi(a|s)}{\pi_{\rm SFT}(a|s)} &= e^{(Q(s,a) - \tau - \lambda)/\tau}\\
    \pi(a|s) &= \pi_{\rm SFT}(a|s) e^{(Q(s,a) - \tau - \lambda)/\tau}
\end{align}

To determine the value of $\lambda$, we use the constraint that $\sum_a \pi(a|s) = 1$:
\begin{align}
    \sum_a \pi(a|s) &= \sum_a \pi_{\rm SFT}(a|s) e^{(Q(s,a) - \tau - \lambda)/\tau} = 1\\
    e^{-(\tau + \lambda)/\tau} \sum_a \pi_{\rm SFT}(a|s) e^{Q(s,a)/\tau} &= 1
\end{align}

Solving for $e^{(\tau + \lambda)/\tau}$:
\begin{align}
    e^{(\tau + \lambda)/\tau} &= \sum_a \pi_{\rm SFT}(a|s) e^{Q(s,a)/\tau} \\
    (\tau + \lambda) &= \tau \log\left(  \sum_a \pi_{\rm SFT}(a|s) e^{Q(s,a)/\tau} \right) \\
    &= V_B(s) 
\end{align}
Substituting this back into our expression for $\pi(a|s)$:
\begin{align}
\pi(a|s) &= \pi_{\rm SFT}(a|s) e^{(Q(s,a) - \tau - \lambda)/\tau}\\
&= \pi_{\rm SFT}(a|s) e^{(Q(s,a) - V_B(s))/\tau}
\end{align}
This is precisely the Boltzmann policy defined in \Cref{eq:boltzmann policy definition}, as claimed.
\end{proof}

At each cycle of our algorithm, we perform a policy evaluation step by training on the $\lambda$-returns. Because of the parametrisation of our action-value function, we implicitly and automatically perform a full policy improvement step, \eqref{eq:boltzmann policy}. However, here we state a more general result regarding policy improvement which includes partial improvement steps.   
\begin{theorem}[KL-regularised policy improvement]\label{thrm:kl-regularised policy improvement}
     Then for any policy $\pi$, let $\pi_B$ be the corresponding Boltzmann policy, given by \eqref{eq:boltzmann policy definition}. If $\pi_{\rm new}$ satisfies 
     \begin{equation}\label{eq:improvement condition}
          D \left( \pi_{\rm new}(\cdot|s') || \pi_B(\cdot|s') \right) \leq D \left( \pi(\cdot|s') || \pi_B(\cdot|s') \right),~\forall s' \in \mathcal{S},
     \end{equation} 
     then $Q^{\pi_{\rm new}} \geq Q^\pi$. Moreover, for any state-action pair $(s,a)$ which has a non-zero probability of transitioning into a state $s'$ at which the inequality in \eqref{eq:improvement condition} is strict, we have that $Q^{\pi_{\rm new}}(s,a) > Q^\pi(s,a)$. 
\end{theorem}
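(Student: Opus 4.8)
The plan is to reduce the statement to a one-step soft-value comparison at every state and then propagate this gain through the soft Bellman operator by a monotone-iteration argument, in the spirit of the classical (soft) policy improvement theorem but adapted to the \emph{partial}-improvement hypothesis in \eqref{eq:improvement condition}. Throughout, $Q^\pi$ denotes the soft action-value of $\pi$, and $V_B$, $\pi_B$ are its Boltzmann state-value and policy from \eqref{eq:boltzmann state-value function definition} and \eqref{eq:boltzmann policy definition}. The single piece of genuine algebra is an identity that converts the KL hypothesis into a statement about one-step values: rearranging the Boltzmann form gives $Q^\pi(s,a) = V_B(s) + \tau\log(\pi_B(a|s)/\pi_{\rm SFT}(a|s))$, and substituting this into the soft one-step value of an arbitrary policy $\mu$ yields
\[
\E_{a\sim\mu}\left[Q^\pi(s,a) - \tau\log\frac{\mu(a|s)}{\pi_{\rm SFT}(a|s)}\right] = V_B(s) - \tau D\bigl(\mu(\cdot|s)\,\|\,\pi_B(\cdot|s)\bigr),
\]
which follows immediately from Theorem~\ref{thrm:greedy kl-regularised policy}. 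Applying this to $\mu = \pi_{\rm new}$ and $\mu = \pi$ and invoking \eqref{eq:improvement condition} shows the left-hand quantity is pointwise at least as large for $\pi_{\rm new}$ as for $\pi$, and strictly larger at exactly those states where \eqref{eq:improvement condition} is strict.

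Next I would feed this into the soft Bellman operator from \eqref{eq:soft bellman operator}. The point is that $[B^\mu Q^\pi](s,a)$ equals $\E_{s'\sim P(\cdot|s,a)}[r_{t+1} + \gamma\,(\E_{a'\sim\mu}[Q^\pi(s',a')] - \tau D_\mu(s'))]$, whose inner bracket is precisely the left-hand side of the identity above evaluated at $s'$. Since the environment reward is policy-independent and $B^\pi Q^\pi = Q^\pi$ by Lemma~\ref{thrm:bellman recursion for soft action-values}, the one-step inequality integrates to $B^{\pi_{\rm new}}Q^\pi \geq B^\pi Q^\pi = Q^\pi$ pointwise. For the strict claim, if $(s,a)$ has positive probability of transitioning to a state where \eqref{eq:improvement condition} is strict, then the expectation over $s'$ is of a nonnegative integrand that is strictly positive on a set of positive probability, giving $[B^{\pi_{\rm new}}Q^\pi](s,a) > Q^\pi(s,a)$.

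Finally I would promote the one-step gain to the full return by monotone iteration. The operator $B^{\pi_{\rm new}}$ is monotone, since it depends on its argument only through the average $\E_{a'\sim\pi_{\rm new}}[Q(s',a')]$ (the $Q$-free KL term plays no role), so $Q_1 \geq Q_2$ implies $B^{\pi_{\rm new}}Q_1 \geq B^{\pi_{\rm new}}Q_2$. Setting $Q_0 = Q^\pi$ and $Q_{n+1} = B^{\pi_{\rm new}}Q_n$, the base case $Q_1 \geq Q_0$ from the previous paragraph together with monotonicity gives a pointwise non-decreasing sequence by induction, while Theorem~\ref{thrm:kl-regularised policy evaluation} guarantees $Q_n \to Q^{\pi_{\rm new}}$. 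Hence $Q^{\pi_{\rm new}} = \lim_n Q_n \geq Q_0 = Q^\pi$, and at the distinguished pairs $Q^{\pi_{\rm new}}(s,a) \geq Q_1(s,a) > Q^\pi(s,a)$.

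I expect the main obstacle to be bookkeeping rather than conceptual: the identity in the first paragraph carries essentially all the content, and the real care is needed in tracking precisely where strict inequality is generated (namely at the first Bellman application, from states where \eqref{eq:improvement condition} is strict) and verifying it survives the monotone limit. A secondary point to handle cleanly is that the environment reward $r_{t+1}$ is policy-independent, so it genuinely cancels in the $B^{\pi_{\rm new}} - B^\pi$ difference, leaving only the KL-dependent terms that the identity controls.
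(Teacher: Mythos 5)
Your proposal is correct and follows essentially the same route as the paper's own proof: the same identity converting the soft one-step value into $V_B(s') - \tau D(\mu(\cdot|s')\,\|\,\pi_B(\cdot|s'))$, the same comparison $B^{\pi_{\rm new}}Q^\pi \geq B^{\pi}Q^\pi = Q^\pi$, and the same monotone-iteration limit via the contraction property. Your handling of the strict case (bounding $Q^{\pi_{\rm new}}$ below by the first iterate $Q_1$ at the distinguished pairs) is if anything slightly more explicit than the paper's.
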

\begin{proof}[Proof of Theorem \ref{thrm:kl-regularised policy improvement}]
We begin by expressing the next-state value in terms of the Boltzmann state value function and the KL divergence between a policy and the Boltzmann policy.

For any state $s'$ and policy $\pi$, consider the expected value:
\begin{align}
\E_{a' \sim \pi(\cdot|s')}\left[Q(s',a') - \tau \log\left(\frac{\pi(a'|s')}{\pi_{\rm SFT}(a'|s')}\right)\right]
\end{align}

Recall that the Boltzmann policy satisfies:
\begin{align}
\pi_B(a'|s') &= \pi_{\rm SFT}(a'|s') e^{(Q(s',a') - V_B(s'))/\tau}
\end{align}

Taking logarithms and rearranging:
\begin{align}
Q(s',a') &= \tau \log\left(\frac{\pi_B(a'|s')}{\pi_{\rm SFT}(a'|s')}\right) + V_B(s')
\end{align}

Substituting this into our expected value expression:
\begin{align}
    &\E_{a' \sim \pi(\cdot|s')}\left[Q(s',a') - \tau \log\left(\frac{\pi(a'|s')}{\pi_{\rm SFT}(a'|s')}\right)\right] \\
    &= \E_{a' \sim \pi(\cdot|s')}\left[\tau \log\left(\frac{\pi_B(a'|s')}{\pi_{\rm SFT}(a'|s')}\right) + V_B(s') - \tau \log\left(\frac{\pi(a'|s')}{\pi_{\rm SFT}(a'|s')}\right)\right] \\
    &= V_B(s') + \tau \E_{a' \sim \pi(\cdot|s')}\left[\log\left(\frac{\pi_B(a'|s')}{\pi(a'|s')}\right)\right] \\
    &= V_B(s') - \tau D(\pi(\cdot|s') \parallel \pi_B(\cdot|s'))
\end{align}

Now, let's apply the soft Bellman operator $B^{\pi_{\rm new}}$ to $Q^\pi$:
\begin{align}
[B^{\pi_{\rm new}} Q^\pi](s,a) &= \E\left[r + \gamma \E_{a' \sim \pi_{\rm new}(\cdot|s')}\left[Q^\pi(s',a') - \tau \log\left(\frac{\pi_{\rm new}(a'|s')}{\pi_{\rm SFT}(a'|s')}\right)\right] \middle| s, a \right] \\
&= \E\left[r + \gamma \left( V_B(s') - \tau D(\pi_{\rm new}(\cdot|s') \parallel \pi_B(\cdot|s'))\right) \middle| s, a  \right]
\end{align}

Similarly, we have:
\begin{align}
[B^{\pi} Q^\pi](s,a) &= \E\left[r + \gamma \left(V_B(s') - \tau D(\pi(\cdot|s') \parallel \pi_B(\cdot|s'))\right) \middle| s, a  \right] \\
\end{align}

Given our improvement condition:
\begin{equation}
D(\pi_{\rm new}(\cdot|s') \parallel \pi_B(\cdot|s')) \leq D(\pi(\cdot|s') \parallel \pi_B(\cdot|s')), \forall s' \in \mathcal{S}
\end{equation}

We can compare:
\begin{align}
[B^{\pi_{\rm new}} Q^\pi](s,a) - Q^\pi(s,a) &= [B^{\pi_{\rm new}} Q^\pi](s,a) - [B^{\pi} Q^\pi](s,a) \\
&= \gamma \tau \E\left[ D(\pi(\cdot|s') \parallel \pi_B(\cdot|s')) - D(\pi_{\rm new}(\cdot|s') \parallel \pi_B(\cdot|s')) \middle| s, a  \right] 
\end{align}

Since the KL divergence is non-negative and $\tau > 0$, the improvement condition ensures that $[B^{\pi_{\rm new}} Q^\pi](s,a) \geq Q^\pi(s,a)$ for all state-action pairs $(s,a)$, with strict inequality at any state-action pair $(s,a)$ which has a non-zero probability of transitioning into a state $s'$ at which the inequality is strict. 

We now make use of the fact that Bellman operators are increasing, in the sense that if $Q_1 \geq Q_2$, then $B^{\pi}Q_1 \geq B^{\pi} Q_2$. Since we know that $B^{\pi_{\rm new}} Q^\pi \geq Q^\pi$, we can say that $\left[ B^{\pi_{\rm new}} Q^\pi \right]^n \geq B^{\pi_{\rm new}} Q^\pi \geq Q^\pi$ for all $n \geq 1$. By taking the limit and using the contraction property of the Bellman operator, we obtain that:
\begin{align}
Q^{\pi_{\rm new}} = \lim_{n \to \infty} [B^{\pi_{\rm new}}]^n Q^\pi \geq Q^\pi
\end{align}
Furthermore, if there exists a state $s'$ where the inequality in \eqref{eq:improvement condition} is strict, then for any state-action pair $(s,a)$ that has a non-zero probability of transitioning to $s'$, the difference $[B^{\pi_{\rm new}} Q^\pi](s,a) - Q^\pi(s,a)$ will be strictly positive, leading to $Q^{\pi_{\rm new}}(s,a) > Q^\pi(s,a)$, as required. 
\end{proof}

\begin{corollary}[Boltzmann policy updates perform policy improvement]\label{thrm:Boltzmann policy improvement}
    If $\pi_{\rm new} = \pi_B$ then
     then $Q^{\pi_{\rm new}} \geq Q^\pi$.
\end{corollary}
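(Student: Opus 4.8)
The plan is to obtain this immediately from \Cref{thrm:kl-regularised policy improvement} by taking $\pi_{\rm new} = \pi_B$ and checking that the improvement condition \eqref{eq:improvement condition} is then satisfied automatically. Concretely, I would instantiate the hypothesis of that theorem and verify
\[
D\left( \pi_B(\cdot|s') \parallel \pi_B(\cdot|s') \right) \leq D\left( \pi(\cdot|s') \parallel \pi_B(\cdot|s') \right), \quad \forall s' \in \mathcal{S}.
\]
The left-hand side is the KL-divergence of a distribution with itself and hence vanishes identically, while the right-hand side is non-negative by Gibbs' inequality (the non-negativity of the KL-divergence). Thus the inequality holds at every state $s'$, and \Cref{thrm:kl-regularised policy improvement} applies verbatim to give $Q^{\pi_{\rm new}} = Q^{\pi_B} \geq Q^\pi$, which is the claim.

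I expect essentially no obstacle here, since all the substantive work is already discharged inside the proof of \Cref{thrm:kl-regularised policy improvement}: rewriting the one-step soft backup $\E_{a' \sim \pi(\cdot|s')}[Q(s',a') - \tau \log(\pi/\pi_{\rm SFT})]$ as $V_B(s') - \tau D(\pi(\cdot|s') \parallel \pi_B(\cdot|s'))$, comparing the backups of $\pi_{\rm new}$ and $\pi$ against $Q^\pi$, and then iterating the monotone contraction $B^{\pi_{\rm new}}$ to pass from a single-step improvement to the global statement $Q^{\pi_{\rm new}} \geq Q^\pi$. The corollary only needs to specialise the general partial-improvement hypothesis to the extremal choice $\pi_{\rm new} = \pi_B$, which is precisely the choice that annihilates the left-hand side of \eqref{eq:improvement condition}.

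If a strict version were desired, I would additionally invoke the second part of \Cref{thrm:kl-regularised policy improvement}: the improvement is strict at any state-action pair $(s,a)$ with non-zero probability of transitioning to a state $s'$ where $\pi(\cdot|s') \neq \pi_B(\cdot|s')$, since at such $s'$ we have $D(\pi(\cdot|s') \parallel \pi_B(\cdot|s')) > 0$ while $D(\pi_B(\cdot|s') \parallel \pi_B(\cdot|s')) = 0$, making the improvement condition strict there. This confirms that the implicit Boltzmann update used throughout \KLQlam{} is a genuine policy-improvement step.
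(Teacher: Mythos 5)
Your proposal is correct and follows exactly the same route as the paper's own proof: specialise \Cref{thrm:kl-regularised policy improvement} to $\pi_{\rm new}=\pi_B$, note that $D(\pi_B\,\|\,\pi_B)=0$ while $D(\pi\,\|\,\pi_B)\ge 0$, so the improvement condition \eqref{eq:improvement condition} holds at every state. Your remark on strictness likewise matches the paper's closing observation.
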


\begin{proof}[Proof of Corollary \ref{thrm:Boltzmann policy improvement}]
This follows directly from Theorem \ref{thrm:kl-regularised policy improvement}. When $\pi_{\rm new} = \pi_B$, the KL divergence from $\pi_{\rm new}$ to $\pi_B$ is zero for all states:
\begin{equation}
    D \left( \pi_B(\cdot|s') || \pi_B(\cdot|s') \right) = 0
\end{equation}

Since KL divergence is non-negative, we have:
\begin{equation}
    D \left( \pi(\cdot|s') || \pi_B(\cdot|s') \right) \geq 0 = D \left( \pi_B(\cdot|s') || \pi_B(\cdot|s') \right)
\end{equation}

Thus, the improvement condition from Theorem \ref{thrm:kl-regularised policy improvement} is satisfied for all states, and therefore $Q^{\pi_{\rm new}} \geq Q^\pi$.

Furthermore, unless $\pi = \pi_B$ already, there exists at least one state $s'$ where $D \left( \pi(\cdot|s') || \pi_B(\cdot|s') \right) > 0$, leading to strict improvement in the value function for state-action pairs that can transition to $s'$.
\end{proof}

\section{Psuedocode for PPO and KLQ}\label{app:psuedocode}

\newcommand{\Traj}{\mathrm{Traj}}

\emph{N.B.} Below, we freely move between the \emph{contexual bandit} and \emph{Markov Decision Process} viewpoints, as discussed in \cref{sec:llms and rlhf}. In particular, we will use states $s_t$ and actions $a_t$ to refer to partial completions and next-tokens, respectively, $s_t = (x,y_{1:t})$, $a_t = y_{t+1}$.

\newcommand{\ppoColor}[1]{{\textcolor{red}{#1}}}

\begin{algorithm}[H]
    \caption{Proximal Policy Optimization (PPO).}
    \label{alg:PPO}
    \begin{algorithmic}
        \STATE \textbf{Input:} Parameters $\theta$ of language model $\pi(a|s;\theta)$ with value head $V(s;\theta)$.
        \FOR{total number of batches}
            \STATE \textbf{\textcolor{gray}{Experience gathering and processing phase - No computational graph is constructed in this phase.}}
            \STATE \emph{\textbf{Generate rollouts from current policy}}
            \STATE Sample initial prompts $\{x^{(i)}\}_{i=1}^N$ from prompt dataset.
            \STATE Sample rollouts $y^{(i)}$ by auto-regressively sampling from the current policy $\pi(a|s;\theta)$ starting at $x^{(i)}$ until time $T^{(i)}$, determined by stop sequence occurrence or maximum length $T_{\rm max}$.
            \STATE Obtain reward from reward model, $R_i = R_\phi(x^{(i)}, y^{(i)}) - \omega \mathbbm{1}_{T^{(i)} = T_{\rm max}}$. 
            \STATE Set $r^{(i)}_{T^{(i)}} = R_i$ and $r^{(i)}_t = 0$ for $t < T^{(i)}$.
            \STATE Collate dataset $\mathcal{D} \gets \{\Traj_i = (x_i,y_i,r^{(i)},T^{(i)})\}_{i=1}^N$.
            \STATE \emph{\textbf{Compute regression targets}}
            \FOR{$\Traj = (x,y,r,T) \in \mathcal{D}$}
                \STATE \ppoColor{$\hat{A}_{T} \gets 0$}
                \FOR{$t = T-1, \dots, 0$}
                    \STATE \ppoColor{Define adjusted rewards $\bar{r}_{t+1} = r_{t+1} - \tau \log\left(  \frac{\pi(a_t| s_t; \theta)}{\pi_{\textrm{SFT}}(a_t| s_t)} \right)$}
                    \STATE \ppoColor{Compute TD-error $\delta_t = \bar{r}_{t+1} + \gamma V(s_{t+1};\theta) - V(s_t;\theta)$}
                    \STATE \ppoColor{Compute GAE advantage estimate via recursion relationship, $\hat{A}_t = \delta_t + (\lambda \gamma) \hat{A}_{t+1}$}
                \ENDFOR
                \STATE \ppoColor{Store $\hat{A}_t$ in $\Traj$}
            \ENDFOR
            \STATE \textbf{\textcolor{gray}{Update phase - Computational graph is constructed for backpropagation.}}
            \FOR{number of training epochs per batch}
                \STATE $\mathcal{D} \gets \mathrm{random\_permutation}(\mathcal{D})$
                \FOR{minibatch $B$ from $\mathcal{D}$}
                \STATE \ppoColor{Evaluate the loss}
                    \STATE \ppoColor{$\mathcal{L}^\textrm{PPO}_\pi(\Traj, t; \theta) \gets 
                    \max\left\{ - \hat{A}_t \frac{\pi(a_t| s_t; \theta)}{\pi_{\textrm{old}}(a_t| s_t)},  - \hat{A}_t \text{clip}^{1+\epsilon}_{1-\epsilon} \left(\frac{\pi(a_t| s_t; \theta)}{\pi_{\textrm{old}}(a_t| s_t)}\right)\right\}$}
                    \STATE \ppoColor{$\mathcal{L}^\textrm{PPO}_V(\Traj, t; \theta) \gets 
                    \max\left\{ 
                    \left( V(s_t; \theta) - \sum_{k=t}^{T-1} \bar{r}_k\right)^2,  
                    \left( \text{clip}^{V_\textrm{old}(s_t) + \eta}_{V_\textrm{old}(s_t) - \eta} \left[V(s_t; \theta)\right] - \sum_{k=t}^{T-1} \bar{r}_k\right)^2
                    \right\}$}
                    \STATE \ppoColor{$\mathcal{L}^\textrm{PPO}(\theta) = \left( \frac{1}{\sum_{\Traj \in B} T} \right) \sum_{\Traj \in B} \sum_{t=1}^{T} \mathcal{L}^\textrm{PPO}_\pi(\Traj, t; \theta) + \zeta \mathcal{L}^\textrm{PPO}_V(\Traj, t; \theta)$}
                    \STATE \ppoColor{and perform one update step with a gradient-based optimiser.}
                \ENDFOR
            \ENDFOR
        \ENDFOR
    \end{algorithmic}
\end{algorithm}

\newcommand{\klqColor}[1]{\textcolor{blue}{#1}}

\begin{algorithm}[H]
    \caption{The KL-regularised Q-Learning (KLQ) algorithm.}
    \label{alg:KLQ}
    \begin{algorithmic}
        \STATE \textbf{Input:} Parameters $\theta$ of language model $\pi(a|s;\theta)$ with value head $V(s;\theta)$.
        \FOR{total number of batches}
            \STATE \textbf{\textcolor{gray}{Experience gathering and processing phase - No computational graph is constructed in this phase.}}
            \STATE \emph{\textbf{Generate rollouts from current policy}}
            \STATE Sample initial prompts $\{x^{(i)}\}_{i=1}^N$ from prompt dataset.
            \STATE Sample rollouts $y^{(i)}$ by auto-regressively sampling from the current policy $\pi(a|s;\theta)$ starting at $x^{(i)}$ until time $T^{(i)}$, determined by stop sequence occurrence or maximum length $T_{\rm max}$.
            \STATE Obtain reward from reward model, $R_i = R_\phi(x^{(i)}, y^{(i)}) - \omega \mathbbm{1}_{T^{(i)} = T_{\rm max}}$. 
            \STATE Set $r^{(i)}_{T^{(i)}} = R_i$ and $r^{(i)}_t = 0$ for $t < T^{(i)}$.
            \STATE Collate dataset $\mathcal{D} \gets \{\Traj_i = (x_i,y_i,r^{(i)},T^{(i)})\}_{i=1}^N$.
            \STATE \emph{\textbf{Compute regression targets}}
            \FOR{$\Traj = (x,y,r,T) \in \mathcal{D}$}
                \STATE \klqColor{$\Delta_{T} \gets 0$}
                \FOR{$t = T-1, \dots, 0$}
                    \STATE \klqColor{Compute action value, $Q(s_t,a_t;\theta) = \tau \log \left( \frac{\pi(a_t|s_t;\theta)}{\pi_{\rm SFT}(a_t|s_t)} \right) + V(s_t;\theta)$}
                    \STATE \klqColor{Compute TD-error according to \eqref{eq:td-error with boltzmann state-value}, $\delta_t = r_{t+1} + \gamma V(s_{t+1};\theta) - Q(s_t,a_t;\theta)$}
                    \STATE \klqColor{Compute error term via recursion relationship, $\Delta_t = \delta_t + (\lambda \gamma) \Delta_{t+1}$}
                    \STATE \klqColor{Compute $\lambda$-return regression target, \eqref{eq:lambda-return}, $\hat{G}_t = \Delta_t + Q(s_t,a_t;\theta)$}
                \ENDFOR
                \STATE \klqColor{Store $\hat{G}_t$ in $\Traj$}
            \ENDFOR
            \STATE \textbf{\textcolor{gray}{Update phase - Computational graph is constructed for backpropagation.}}
            \STATE \emph{\textbf{Train the action-value function}}
            \FOR{number of training epochs per batch}
                \STATE $\mathcal{D} \gets \mathrm{random\_permutation}(\mathcal{D})$
                \FOR{minibatch $B$ from $\mathcal{D}$}
                    \STATE \klqColor{Perform one update step on the $\ell^2$-loss}
                    \STATE \klqColor{$\mathcal{L}(\theta) = \left( \frac{1}{\sum_{\Traj \in B} T} \right) \sum_{\Traj \in B} \sum_{t=1}^{T} 
                    \left( \tau \log\left( \frac{\pi(a_t|s_t;\theta)}{\pi_{\rm SFT}(a_t|s_t)} \right) + V(s_t;\theta) - \hat{G}_t \right)^2$}
                    \STATE \klqColor{using a gradient-based optimiser.}
                \ENDFOR
            \ENDFOR
        \ENDFOR
    \end{algorithmic}
\end{algorithm}

\section{Full equivalence derivation}\label{app:full derivation}

In this appendix we show perform the full derivation for the equivalance between the update sequence and loss minimisation.
We begin by defining a Bellman operator corresponding to the $\lambda$-return estimator,
\begin{equation}
    \mathcal{B}^{\lambda} Q(s,a) = \E_{\pi[Q]}\left[ G^\lambda_t[Q] \middle| s_t = s, a_t = a \right],
\end{equation}
where we have made explicit the dependence of the $\lambda$-return on the current action-value function $Q$. We introduce the $\alpha$-conservative version of $\mathcal{B}^\lambda$, defined by:
\begin{equation}
   \left[ \mathcal{B}^{\lambda, \alpha} Q \right] (s,a) \defeq \alpha \left[ \mathcal{B}^\lambda Q \right](s,a) + (1 - \alpha) Q(s,a).
\end{equation}
This mixture operator outputs a new action-value function which is a mixture between the previous action-value function $Q(s,a)$ and the Bellman backup $\mathcal{B}^\lambda Q$. Note that the expectation of the conservative $\lambda$-return, $G^{\lambda,\alpha}_t$, under the policy $\pi[Q]$ and conditioned on starting state-action pair $(s,a)$, is precisely $\left[ \mathcal{B}^{\lambda, \alpha} Q \right]$.

\subsection{$Q$-space updates}

We consider the $Q$-space updates given by $\eqref{eq:q-space update rule}$. The loss for this update rule is:
\begin{equation}
    \mathcal{L}_Q(Q) = \E\left[ \left( Q(s,a) - G^{\lambda, \alpha}[Q_k](s,a) \right)^2 \right]
\end{equation}
We will show that the minimiser of this loss is $Q_{k+1} = \mathcal{B}^{\lambda, \alpha} Q_k$. In the following derivation, we will use $\E_{(s,a)}$ and $\Var_{(s,a)}$ to mean expectations and variances conditioned on a state-action pair $(s,a)$. Throughout, we will make use of the fact that $\E_{(s,a)}\left[ G^{\lambda, \alpha}[Q_k](s,a) \right] = \left[ \mathcal{B}^{\lambda, \alpha}Q_k \right](s,a)$
\begin{align*}
     \mathcal{L}(Q) &= \E\left[ \left( Q(s,a) - G^{\lambda, \alpha}[Q_k](s,a) \right)^2 \right] \\
     &= \E\left[ \left( Q(s,a) - \E_{(s,a)}\left[ G^{\lambda, \alpha}[Q_k](s,a) \right] + \E_{(s,a)}\left[ G^{\lambda, \alpha}[Q_k](s,a) \right] - G^{\lambda, \alpha}[Q_k](s,a) \right)^2 \right] \\
     &= \E\left[ \left( Q(s,a) - \E_{(s,a)}\left[ G^{\lambda, \alpha}[Q_k](s,a) \right] \right)^2 \right]  \\
     &+ 2 \E\left[ \left( Q(s,a) - \E_{(s,a)}\left[ G^{\lambda, \alpha}[Q_k](s,a) \right] \right)\left( \E_{(s,a)}\left[ G^{\lambda, \alpha}[Q_k](s,a) \right] - G^{\lambda, \alpha}[Q_k](s,a) \right) \right]  \\ 
     &+ \E\left[ \left( \E_{(s,a)}\left[ G^{\lambda, \alpha}[Q_k](s,a) \right] - G^{\lambda, \alpha}[Q_k](s,a) \right)^2 \right] \\
     &= \E\left[ \left( Q(s,a) - \left[ \mathcal{B}^{\lambda, \alpha}Q_k \right](s,a) \right)^2 \right]  \\
     &+ 2 \E\left[ \E_{(s,a)}\left[  \left( Q(s,a) - \left[ \mathcal{B}^{\lambda, \alpha}Q_k \right](s,a) \right)\left( \left[ \mathcal{B}^{\lambda, \alpha}Q_k \right](s,a) - G^{\lambda, \alpha}[Q_k](s,a) \right)  \right] \right]  \\ 
     &+ \E\left[ \Var_{(s,a)}\left( G^{\lambda, \alpha}[Q_k](s,a) \right) \right] \\
     &= \E\left[ \left( Q(s,a) - \left[ \mathcal{B}^{\lambda, \alpha}Q_k \right](s,a) \right)^2 \right]  \\
     &+ 2 \E\left[ \left( Q(s,a) - \left[ \mathcal{B}^{\lambda, \alpha}Q_k \right](s,a) \right) \left( \left[ \mathcal{B}^{\lambda, \alpha}Q_k \right](s,a) - \E_{(s,a)}\left[ G^{\lambda, \alpha}[Q_k](s,a) \right] \right)  \right] \\ 
     &+ \E\left[ \Var_{(s,a)}\left( G^{\lambda, \alpha}[Q_k](s,a) \right) \right] \\
     &= \E\left[ \left( Q(s,a) - Q_{k+1}(s,a) \right)^2 \right] + \E\left[ \Var_{(s,a)}\left( G^{\lambda, \alpha}[Q_k](s,a) \right) \right] 
\end{align*}
From this it is clear that, provided the state-action distribution has support everywhere, the unique minimiser of $\mathcal{L}(Q)$ is $Q_{k+1}$.

\subsection{$(\pi,V)$-space updates}

We start with the $\pi$-space update. We begin by considering the following loss over the policy at state $s$:
\begin{align}
    \KL\left( \pi || \pi_{k+1} \right)(s)
\end{align}
Clearly this loss has unique minimiser $\pi_{k+1}(a|s)$. In what follows, we will use we will use $\simeq$ to denote equality up to a positive affine transformation, \ie, multiplication by a positive scalar and addition by a constant. Note that such transformations preserve the loss minimiser. To keep the derivation decluttered, we suppress the dependence on the state.  
\begin{align*}
    - \KL\left( \pi || \pi_{k+1} \right)(s) &= \E_\pi\left[ \log\left( \frac{\pi_{k+1}(a|s)}{\pi(a|s)} \right) \right]  \\
    &= \E_\pi\left[ \log\left( \frac{\pi_b(a|s) e^{ ( Q_{k+1}(s,a) - V[Q_{k+1}](s))/\tau } }{\pi(a|s)} \right) \right] \\
    &\simeq \E_\pi\left[ \log\left( \frac{\pi_b(a|s) e^{ \left[ \mathcal{B}^{\lambda, \alpha} Q_{k} \right](s,a)/\tau } }{\pi(a|s)} \right) \right] \\
    &= \E_\pi\left[ \log\left( \frac{\pi_b(a|s) e^{ ( \alpha \left[\mathcal{B} Q_k\right](s,a) + (1 - \alpha) Q_k(s,a) )/\tau } }{\pi(a|s)} \right) \right] \\
    &= \E_\pi\left[ \log\left( \frac{ \pi_b(a|s)^\alpha e^{ \alpha \left[\mathcal{B} Q_k\right](s,a)/\tau } \left( \pi_b(a|s) e^{ Q_k(s,a)/\tau } \right)^{(1 - \alpha)}  }{\pi(a|s)^\alpha \pi(a|s)^{(1 - \alpha)}} \right) \right] \\
    &\simeq \E_\pi\left[ \log\left( \frac{ \pi_b(a|s)^\alpha e^{ \alpha \left[\mathcal{B} Q_k\right](s,a)/\tau } \left( \pi_k(a|s) \right)^{(1 - \alpha)}  }{\pi(a|s)^\alpha \pi(a|s)^{(1 - \alpha)}} \right) \right] \\
    &= \frac{\alpha}{\tau} \E_\pi\left[ \left[ \mathcal{B} Q_k \right] (s,a) \right] + \alpha \E_\pi\left[  \log\left( \frac{\pi_b(a|s)}{\pi(a|s)} \right) \right] \\
    &+ (1 - \alpha) \E_\pi\left[  \log\left( \frac{\pi_k(a|s)}{\pi(a|s)} \right) \right] \\
    &= \frac{\alpha}{\tau} \E_\pi\left[ \left[ \mathcal{B} Q_k \right] (s,a) \right] - \alpha \KL(\pi || \pi_b)(s) - (1 - \alpha) \KL(\pi || \pi_k)(s) \\
    &\simeq \E_\pi\left[ G_\mathcal{B}[Q_k](s,a)  \right] - \tau \KL(\pi || \pi_b)(s) - \tau \left( \frac{1 - \alpha}{\alpha} \right) \KL(\pi || \pi_k)(s) \\
    &\simeq \E_\pi\left[ \hat{A}[Q_k](s,a)  \right] - \tau \KL(\pi || \pi_b)(s) - \tau \left( \frac{1 - \alpha}{\alpha} \right) \KL(\pi || \pi_k)(s) \\
    &= \E_{\pi_k}\left[ \hat{A}[Q_k](s,a)\frac{\pi(a|s)}{\pi_k(a|s)}  \right] - \tau \KL(\pi || \pi_b)(s) - \beta \KL(\pi || \pi_k)(s).
\end{align*}
Note that in the final line, we make use of our relationship for $\beta$, \cref{eq:beta definition}. From this, we obtain the objective in \eqref{eq:pi objective} by averaging over a distribution of states which has support everywhere.

We now move onto the $V$-space update. The loss for this update rule is given in \eqref{eq:v loss}, 
\begin{equation}
    \mathcal{L}(V) = \E\left[ \left( V(s) - \left( G^{\lambda, \alpha}[Q_k](s,a) - \tau \log\left( \frac{\pi_{k+1}(a|s)}{\pi_b(a|s)} \right) \right) \right)^2 \right].
\end{equation}
We will show that the mimimiser of this loss is $V_{k+1} = V[Q_{k+1}]$. Firstly, note that:
\begin{align*}
    \E_{(s,a)} \left[ G^{\lambda, \alpha}[Q_k](s,a) - \tau \log\left( \frac{\pi_{k+1}(a|s)}{\pi_b(a|s)} \right) \right] &= \left[ \mathcal{B}^{\lambda, \alpha} Q_k \right](s,a) - \tau \log\left( \frac{\pi_{k+1}(a|s)}{\pi_b(a|s)} \right) \\
    &= Q_{k+1}(s,a) - \tau \log \left( \frac{\pi_{k+1}(a|s)}{\pi_b(a|s)} \right) \\
    &= Q_{k+1}(s,a) - \tau \log \left( \exp\left( (Q_{k+1}(s,a) - V_{k+1}(s))/\tau \right) \right) \\
    &= Q_{k+1}(s,a) - (Q_{k+1}(s,a) - V_{k+1}(s)) \\
    &= V_{k+1}(s)
\end{align*}
From here, the derivation is straightforward:
\begin{align*}
    \mathcal{L}(V) &= \E\left[ \left( V(s) - \left( G^{\lambda, \alpha}[Q_k](s,a) - \tau \log\left( \frac{\pi_{k+1}(a|s)}{\pi_b(a|s)} \right) \right) \right)^2 \right] \\ 
    &= \E\left[ \left( V(s) - V_{k+1}(s) + V_{k+1}(s) - \left( G^{\lambda, \alpha}[Q_k](s,a) - \tau \log\left( \frac{\pi_{k+1}(a|s)}{\pi_b(a|s)} \right) \right) \right)^2 \right] \\ 
    &= \E\left[ \left( V(s) - V_{k+1}(s) \right)^2 \right] \\
    &+ 2 \E\left[ \left( V(s) - V_{k+1}(s) \right) \left( V_{k+1}(s) - \left( G^{\lambda, \alpha}[Q_k](s,a) - \tau \log\left( \frac{\pi_{k+1}(a|s)}{\pi_b(a|s)} \right) \right) \right) \right] \\
    &+ \E\left[ \left( V_{k+1}(s) - \left( G^{\lambda, \alpha}[Q_k](s,a) - \tau \log\left( \frac{\pi_{k+1}(a|s)}{\pi_b(a|s)} \right) \right) \right)^2 \right] \\
    &= \E\left[ \left( V(s) - V_{k+1}(s) \right)^2 \right] \\
    &+ 2 \E\left[ \E_{(s,a)} \left[ \left( V(s) - V_{k+1}(s) \right) \left( V_{k+1}(s) - \left( G^{\lambda, \alpha}[Q_k](s,a) - \tau \log\left( \frac{\pi_{k+1}(a|s)}{\pi_b(a|s)} \right) \right) \right) \right] \right] \\
    &+ \E\left[ \Var_{(s,a)}\left( G^{\lambda, \alpha}[Q_k](s,a) - \tau \log\left( \frac{\pi_{k+1}(a|s)}{\pi_b(a|s)} \right) \right) \right] \\
    &= \E\left[ \left( V(s) - V_{k+1}(s) \right)^2 \right] + \E\left[ \Var_{(s,a)}\left( G^{\lambda, \alpha}[Q_k](s,a) - \tau \log\left( \frac{\pi_{k+1}(a|s)}{\pi_b(a|s)} \right) \right) \right] \\
    &+ 2 \E\left[  \left( V(s) - V_{k+1}(s) \right) \left( V_{k+1}(s) -  \E_{(s,a)} \left[  G^{\lambda, \alpha}[Q_k](s,a)  - \tau \log\left( \frac{\pi_{k+1}(a|s)}{\pi_b(a|s)} \right)  \right] \right) \right] \\
    &= \E\left[ \left( V(s) - V_{k+1}(s) \right)^2 \right] + \E\left[ \Var_{(s,a)}\left( G^{\lambda, \alpha}[Q_k](s,a) - \tau \log\left( \frac{\pi_{k+1}(a|s)}{\pi_b(a|s)} \right) \right) \right] 
\end{align*}
This shows that the unique minimiser of this loss is $V_{k+1}(s)$, provided the state distribution has support everywhere.

\section{Experimental Details}\label{app:exp_details}

\Cref{tab:models} contains details on the policy and reward models we used for each task.
These models are open-access and were downloaded from Huggingface \citep{wolf_huggingfaces_2020}.
\Cref{tab:hyperparams} gives details on the hyperparameters used across all tasks.
We use a customised fork of the TRL library \citep{werra_trl_2020} to train our models, our code is available here: GITHUB-LINK-TO-FOLLOW.

\begin{table}[H]
    \centering
    \caption{Details on the initial policy and reward models used for each task.}
    \vskip 0.15in %
    \begin{tabular}{lll}
        \toprule
        \textbf{Task} & \textbf{Initial Policy Model} & \textbf{Reward Model} \\
        \midrule
        IMDb & meta-llama/Llama-3.2-1B & lvwerra/distilbert-imdb \\
        TL;DR & cleanrl/EleutherAI\_pythia-1b-deduped\_\_sft\_\_tldr & cleanrl/EleutherAI\_pythia-1b-deduped\_\_reward\_\_tldr \\
        Anthropic-HH & yongzx/pythia-1b-sft-hh & ray2333/gpt2-large-helpful-reward\_model \\
        \bottomrule
    \end{tabular}
    \label{tab:models}
\end{table}

\begin{table}[H]
    \centering
    \caption{Details on the hyperparameters used across all tasks.}
    \vskip 0.15in %
    \begin{tabular}{lll}
        \toprule
        \textbf{Hyperparameter} & \textbf{Symbol} & \textbf{Value} \\
        \midrule
         Discount factor & $\gamma$ & 1 \\
         KL-divergence penalty & $\tau$ & 0.05 \\
         Truncation rate & $\lambda$ & 0.95 \\
         Initial learning rate & - & $1.41 \times 10^{-5}$ \\
         Learning rate schedule & - & Linear \\
         Number of training epochs per batch & - & 4 \\
         Number of rollouts per batch per device & - & $48$ \\ 
         Total number of rollouts per batch & $N$ & $192$ \\ 
         Minibatch size & $|B|$ & $192$ \\
         Number of devices (GPUs) & - & 4 \\
        \bottomrule
    \end{tabular}
    \label{tab:hyperparams}
\end{table}

\section{Supplementary Results}\label{app:supp_results}

\subsection{Validation Curves}\label{sec:exp_val_curves}
\begin{figure}[H]
    \centering
    \begin{subfigure}[b]{0.45\textwidth}
        \includegraphics[width=\textwidth]{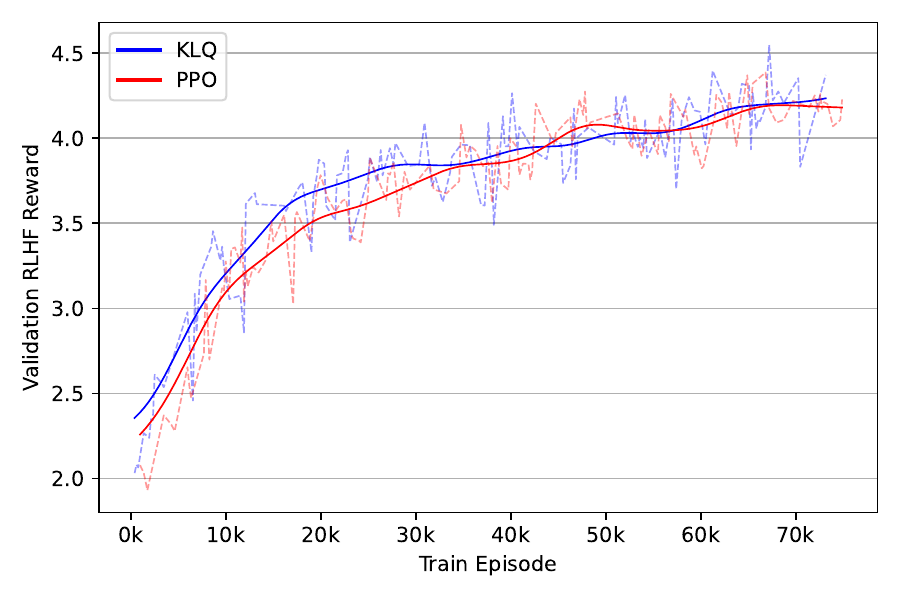}
        \caption{TL;DR}
    \end{subfigure}
    \begin{subfigure}[b]{0.45\textwidth}
        \includegraphics[width=\textwidth]{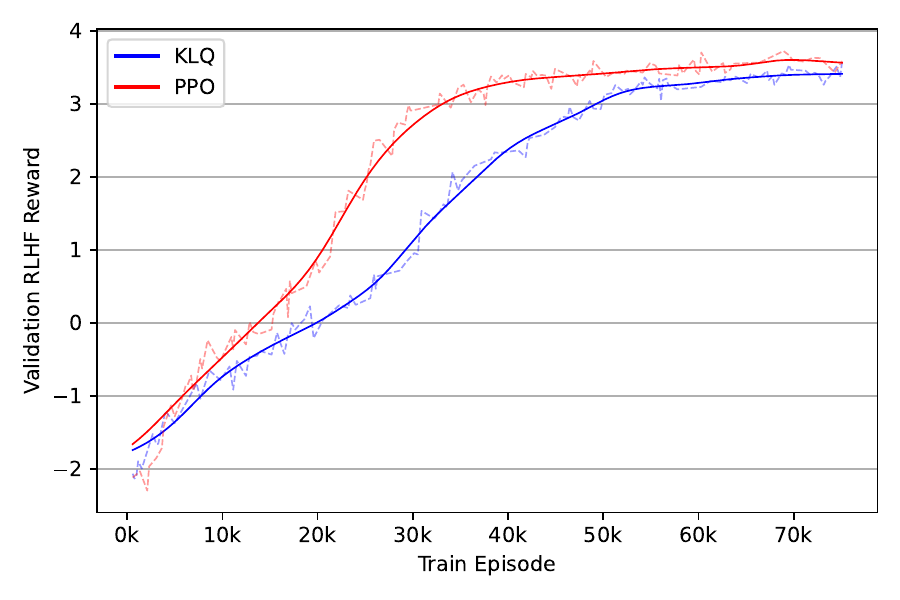}
        \caption{HH}
    \end{subfigure}
    \caption{RLHF rewards for the held-out validation set over training for \KLQlam{} and PPO on (a) TL;DR and (b) HH.}
    \label{fig:val_curves}
\end{figure}

\begin{table}[H]
    \centering
    \caption{Training wall clock times for each algorithm on each dataset.}
    \vskip 0.15in %
    \begin{tabular}{lll}
        \toprule
        \textbf{Algorithm} & \textbf{TL;DR Training Time} & \textbf{HH Training Time} \\
        \midrule
         \KLQlam{} & 5 hours 16 minutes & 4 hours 34 minutes \\
         PPO & 5 hours 16 minutes & 4 hours 32 minutes \\
        \bottomrule
    \end{tabular}
    \label{tab:time}
\end{table}

\Cref{fig:val_curves} shows \KLQlam{} and PPO achieving a similar final validation reward on TL;DR and HH.
\KLQlam{} slightly lags behind PPO at times on HH.
The wall clock train times of the algorithms are given in \cref{tab:time}.
Given both algorithms on both datasets complete almost the same number of training episodes in almost the same amount of time, we infer that \KLQlam{} has a near equal per-update compute cost relative to PPO.

\subsection{KL-penalty Coefficient Validation Curves}\label{sec:kl_val_curves}
\begin{figure}[H]
    \centering
    \begin{subfigure}[b]{0.45\textwidth}
        \includegraphics[width=\textwidth]{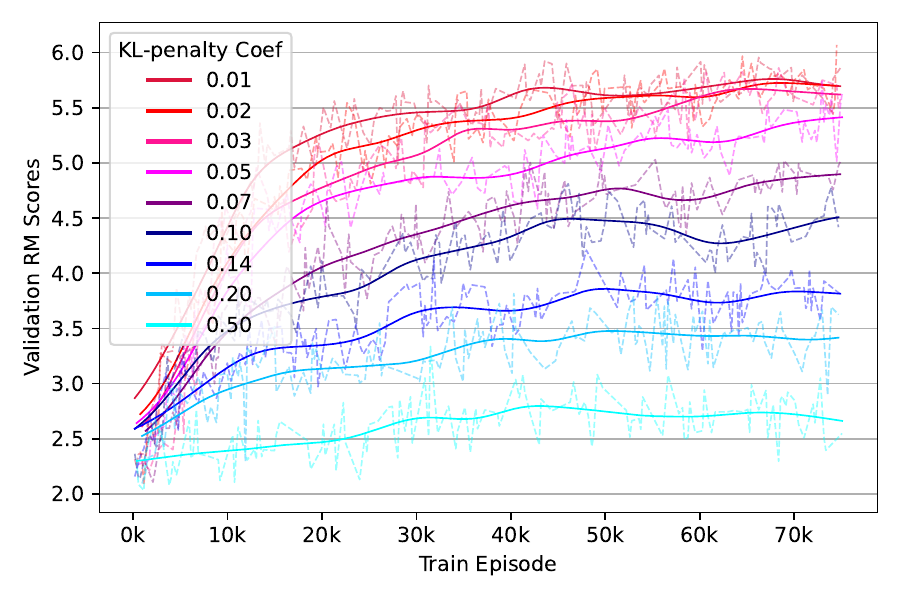}
        \caption{TL;DR, KLQ}
    \end{subfigure}
    \begin{subfigure}[b]{0.45\textwidth}
        \includegraphics[width=\textwidth]{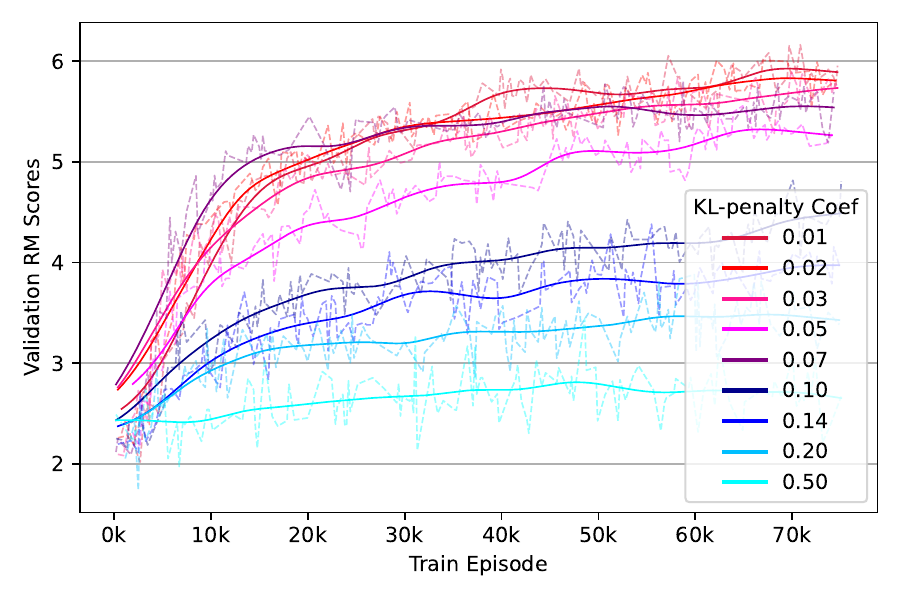}
        \caption{TL;DR, PPO}
    \end{subfigure}
    \begin{subfigure}[b]{0.45\textwidth}
        \includegraphics[width=\textwidth]{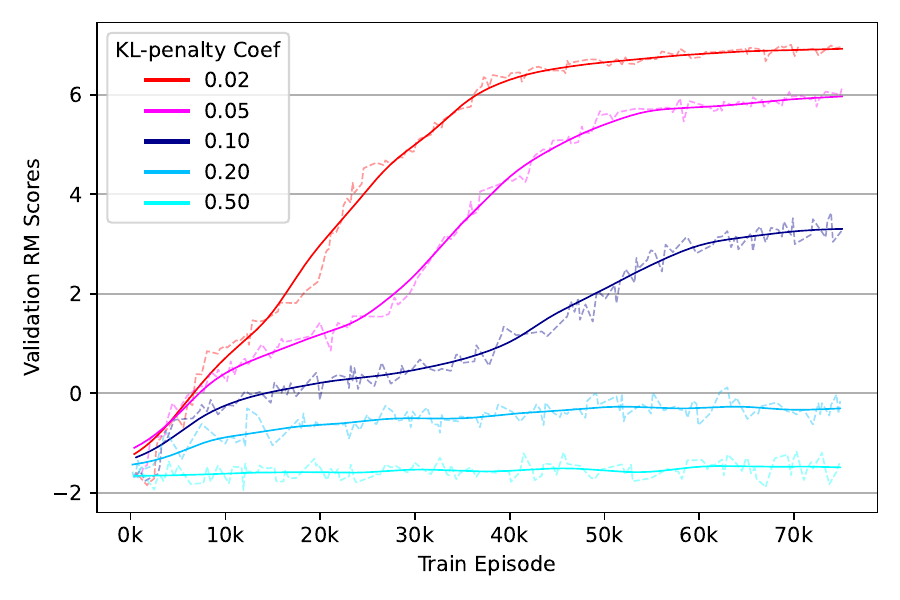}
        \caption{HH, KLQ}
    \end{subfigure}
    \begin{subfigure}[b]{0.45\textwidth}
        \includegraphics[width=\textwidth]{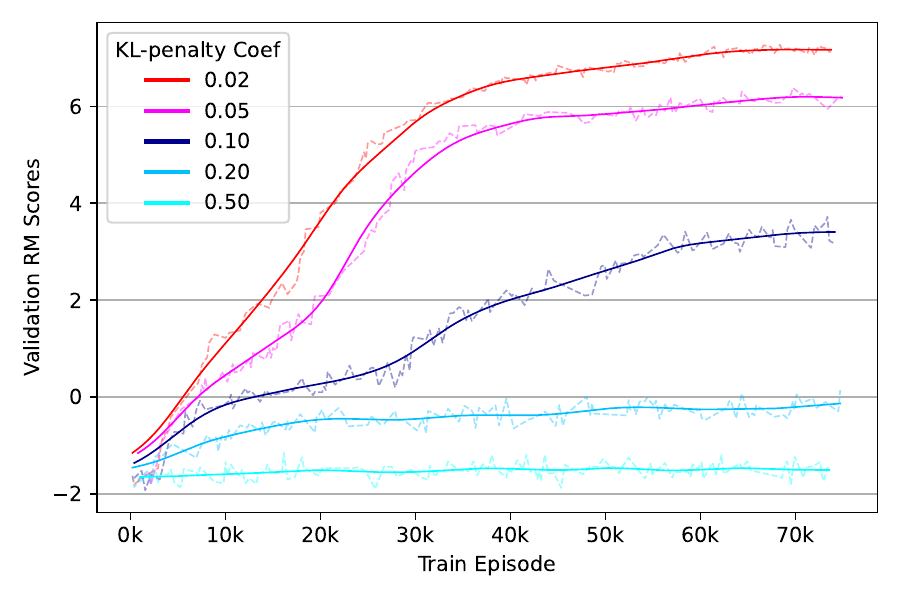}
        \caption{HH, PPO}
    \end{subfigure}
    \caption{Reward model scores for the held-out validation set over training for \KLQlam{} and PPO across TL;DR and HH for a variety of different KL-penalty coefficients.}
    \label{fig:kl_val_curves}
\end{figure}

\Cref{fig:kl_val_curves} shows \KLQlam{} and PPO achieving similar reward model scores on TL;DR and HH across a variety of KL-penalty coefficients.
We can see most of the difference in the variation of their final reward model score is likely due to training noise.

\subsection{Learning rate ablation}\label{sec:ablation_learning_rate}
\Cref{fig:lr_ablations} shows validation RLHF reward over training progress for PPO and KLQ on TL;DR and HH respectively.
The default \texttt{trl} learning rate of $1.4 \times 10^{-5}$ is marked in black, with smaller learning rates on a spectrum to blue and larger learning rates on a spectrum to red. 
Note that there is significant noise in the evaluation scores, but that after smoothing, there are persistent differences between the learning rates.
In all cases the default TRL learning rate is a good choice.
If anything, KLQ appears more robust than PPO to the different learning rate values.

\begin{figure}[H]
    \centering
    \begin{subfigure}[b]{0.45\textwidth}
        \includegraphics[width=\textwidth]{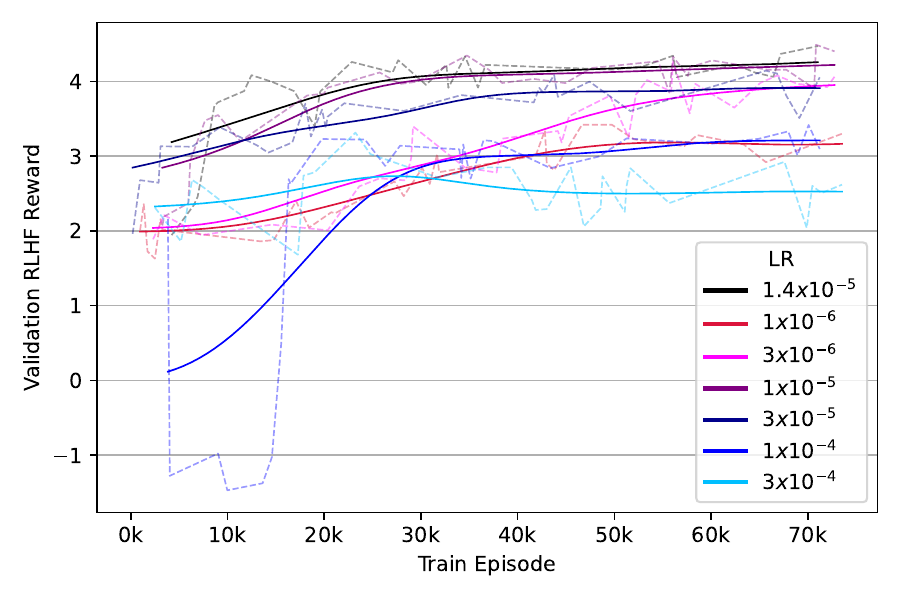}
        \caption{TL;DR, KLQ}
    \end{subfigure}
    \begin{subfigure}[b]{0.45\textwidth}
        \includegraphics[width=\textwidth]{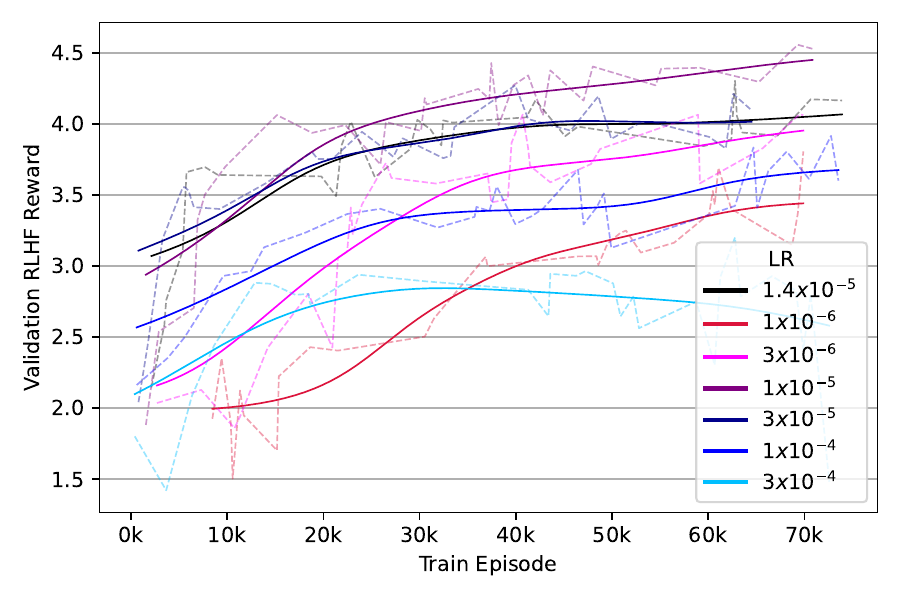}
        \caption{TL;DR, PPO}
    \end{subfigure}
    
    \begin{subfigure}[b]{0.45\textwidth}
        \includegraphics[width=\textwidth]{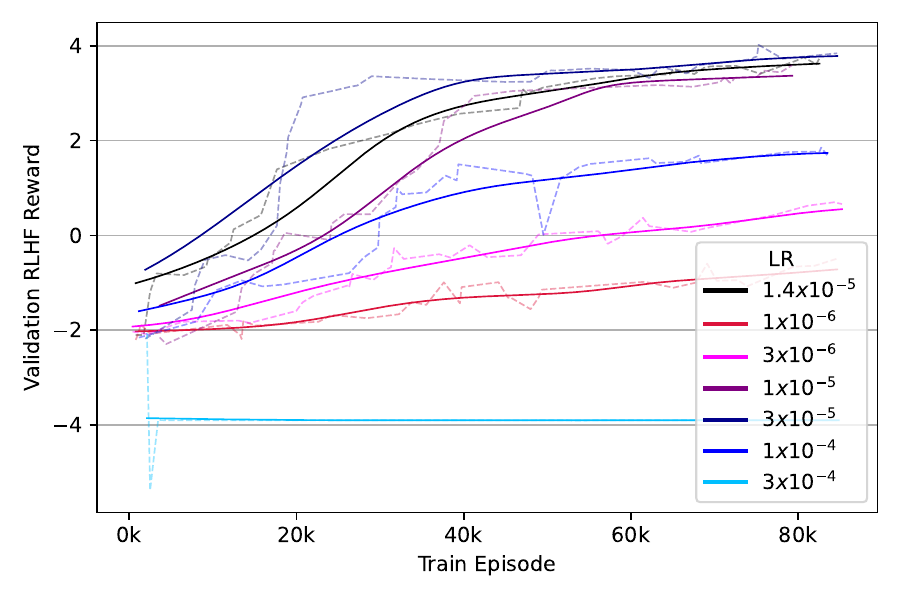}
        \caption{HH, KLQ}
    \end{subfigure}
    \begin{subfigure}[b]{0.45\textwidth}
        \includegraphics[width=\textwidth]{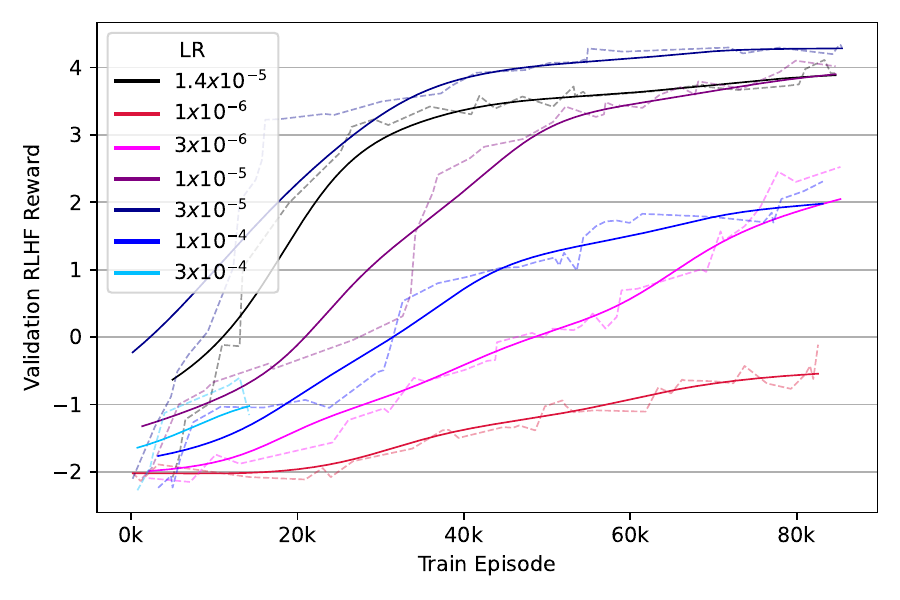}
        \caption{HH, PPO}
    \end{subfigure}
    
    \caption{Validation RLHF reward over training for different learning rates. }
    \label{fig:lr_ablations}
\end{figure}

\subsection{$\lambda$-ablation}\label{sec:ablation_lambda}
\begin{figure}[H]
    \centering
    \begin{subfigure}[b]{0.45\textwidth}
        \includegraphics[width=\textwidth]{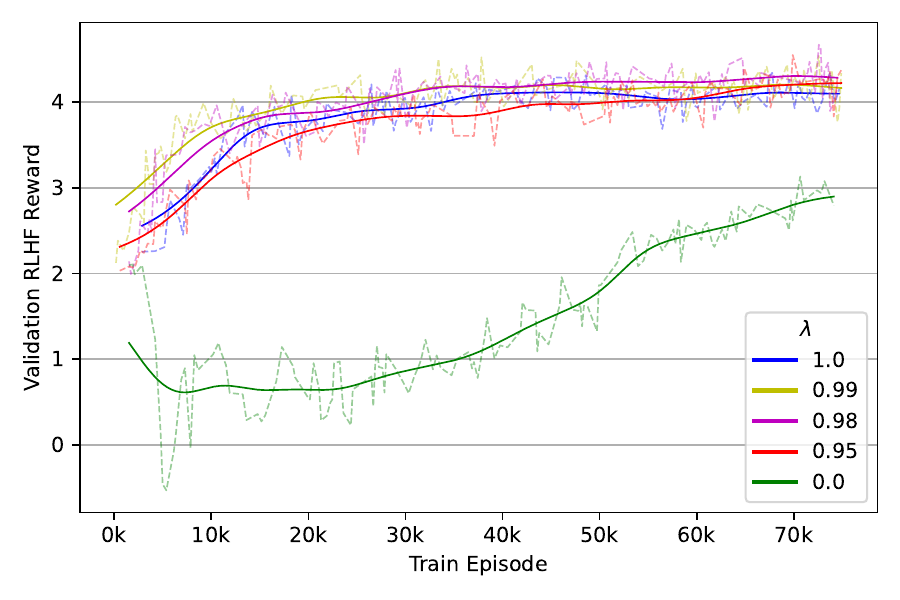}
        \caption{TL;DR}
    \end{subfigure}
    \begin{subfigure}[b]{0.45\textwidth}
        \includegraphics[width=\textwidth]{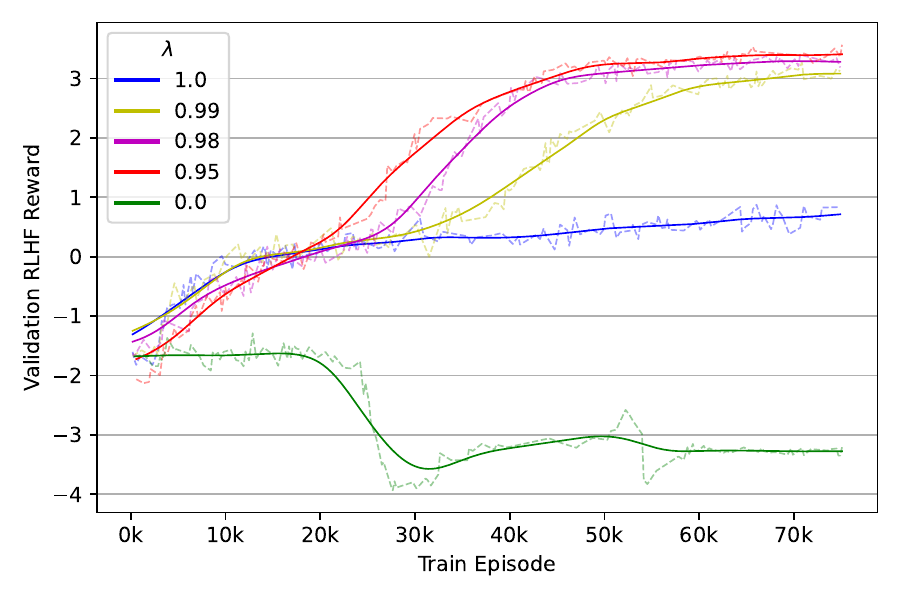}
        \caption{HH}
    \end{subfigure}
    \caption{RLHF reward over training for \KLQlam{} for runs with various $\lambda$-return parameters on a) TL;DR and b) HH.}
    \label{fig:lambda_ablations}
\end{figure}

\Cref{fig:lambda_ablations} shows the effect of varying the $\lambda$ on the training curves
Interestingly, we note that the default value of $\lambda = 0.95$ appears to be (near) optimal, but that larger values of $\lambda$ sometimes learn faster near the start of training.
This is in conflict with \citep[Figure 1]{ahmadian_back_2024}, which suggests that $\lambda=1$ is optimal.

\section{LLM-as-a-Judge Prompts}\label{app:llm_judge}
\begin{table}[h]
    \centering
    \begin{tabular}{p{1.3cm}p{12cm}}  %
        \toprule
        \textbf{Dataset} & \textbf{Judge Prompt} \\
        \midrule
        TL;DR 
        &
        \begin{minipage}[t]{12cm}
        You will be given two responses (``A" and ``B") which are intended to summarise a forum post query.

        Your task is to judge which summary does a better job of summarizing the most important points in the given forum post, without including unimportant or irrelevant details.

        Judge based on accuracy, coverage, and coherence.

        Firstly, in the ``Comparison" field, provide a one-sentence comparison of the two summaries, explaining which you prefer and why.

        Secondly, in the ``Preferred" field, indicate your choice.
        \end{minipage}
        \\[12pt]
        \midrule
        HH
        & 
        \begin{minipage}[t]{12cm}
        You will be given two responses (``A" and ``B") to a dialogue between a human user and an LLM assistant.

        Your task is to judge which response does a better job of being a helpful and harmless assistant, using the following criteria:
        \begin{itemize}
        \item Honesty: Does the response avoid deception, acknowledge uncertainty, and refuse harmful requests appropriately?
        \item Helpfulness: Does it provide thorough, accurate, and practical assistance while maintaining ethical boundaries?
        \item Safety: Does it demonstrate appropriate caution with sensitive topics and potentially harmful requests?
        \item Clarity: Is the response well-structured, clear, and appropriately detailed?
        \end{itemize}

        Firstly, in the ``Comparison" field, provide a one-sentence comparison of the two responses, explaining which you prefer and why.

        Secondly, in the ``Preferred" field, indicate your choice.
        \end{minipage}
        \\
        \bottomrule
    \end{tabular}
    \caption{LLM-as-a-judge prompts}
    \label{tab:my_label}
\end{table}

\end{document}